\documentclass{article} %
\usepackage{iclr2023_conference,times}

\PassOptionsToPackage{numbers,sort&compress}{natbib}

\usepackage[utf8]{inputenc} %
\usepackage[T1]{fontenc}    %
\usepackage{hyperref}       %
\usepackage{url}            %
\usepackage{booktabs}       %
\usepackage{amsfonts}       %
\usepackage{nicefrac}       %
\usepackage{microtype}      %
\usepackage{xcolor}         %
\usepackage{times}
\usepackage{helvet}
\usepackage{courier}
\usepackage{color}
\usepackage{epsfig}
\usepackage{wrapfig}
\usepackage{bm}
\usepackage{makecell}
\usepackage{multirow}
\usepackage{algorithm}
\usepackage{algorithmic}
\usepackage{adjustbox}
\usepackage{bbm}
\usepackage{epstopdf}
\usepackage{threeparttable}
\usepackage{tablefootnote}
\usepackage[normalem]{ulem}
\usepackage{color,soul}
\usepackage[subtle]{savetrees}
\usepackage{comment}

\usepackage{smile}

\usepackage{mathtools} %
\usepackage{booktabs} %
\usepackage{tikz} %

\definecolor{citecolor}{HTML}{0071BC}
\definecolor{linkcolor}{HTML}{ED1C24}
\hypersetup{colorlinks=true, linkcolor=linkcolor, citecolor=citecolor,urlcolor=black}

\newcommand{\alert}{\textcolor{black}}

\title{
Your Contrastive Learning Is Secretly Doing Stochastic Neighbor Embedding}

\author{%
    Tianyang Hu\textsuperscript{\rm 1}\quad
    Zhili Liu\textsuperscript{\rm 1,2}\quad
    Fengwei Zhou\textsuperscript{\rm 1}\quad
    Wenjia Wang\textsuperscript{\rm 2,3}\quad
    Weiran Huang\textsuperscript{\rm  4}\thanks{Correspondence to Weiran Huang (weiran.huang@outlook.com).}\\[10pt]
\textsuperscript{\rm 1} Huawei Noah’s Ark Lab\quad
\textsuperscript{\rm 2}  Hong Kong University of Science and Technology\\
\textsuperscript{\rm 3} Hong Kong University of Science and Technology (Guangzhou)\\
\textsuperscript{\rm 4} Qing Yuan Research Institute, Shanghai Jiao Tong University 
}

\iclrfinalcopy
\begin{document}
\maketitle

\begin{abstract}
Contrastive learning, especially self-supervised contrastive learning (SSCL), has achieved great success in extracting powerful features from unlabeled data.
In this work, we contribute to the theoretical understanding of SSCL and uncover its connection to the classic data visualization method, stochastic neighbor embedding (SNE) \citep{hinton2002stochastic}, whose goal is to preserve pairwise distances.
From the perspective of preserving neighboring information, SSCL can be viewed as a special case of SNE with the input space pairwise similarities specified by data augmentation. 
The established correspondence facilitates deeper theoretical understanding of learned features of SSCL, as well as methodological guidelines for practical improvement. 
Specifically, through the lens of SNE, we provide novel analysis on domain-agnostic augmentations, implicit bias and robustness of learned features. 
To illustrate the practical advantage, we demonstrate that the modifications from SNE to $t$-SNE \citep{van2008visualizing} can also be adopted in the SSCL setting, 
achieving significant improvement in both in-distribution and out-of-distribution generalization. 
\end{abstract}

\section{Introduction}

Recently, contrastive learning, especially self-supervised contrastive learning (SSCL) has drawn massive attention, with many state-of-the-art models following this paradigm in both computer vision \citep{he2020momentum,chen2020simple,chen2020improved,grill2020bootstrap,chen2021exploring,zbontar2021barlow} and natural language processing \citep{fang2020cert,wu2020clear,giorgi2020declutr,gao2021simcse,yan2021consert}. 
In contrast to supervised learning, SSCL learns the representation through a large number of unlabeled data and artificially defined self-supervision signals, i.e., regarding the augmented views of a data sample as positive pairs and randomly sampled data as negative pairs. 
By enforcing the features of positive pairs to align and those of negative pairs to be distant, SSCL produces discriminative features with state-of-the-art performance for various downstream tasks.

Despite the empirical success, the theoretical understanding is under-explored as to how the learned features depend on the data and augmentation, how different components in SSCL work and what are the implicit biases when there exist multiple empirical loss minimizers.
For instance, SSCL methods are widely adopted for pretraining, whose feature mappings are to be utilized for various downstream tasks which are usually out-of-distribution (OOD). The distribution shift poses great challenges for the feature learning process with extra requirement for robustness and OOD generalization~\citep{arjovsky2019invariant,krueger2021out,bai2020decaug,he2020towards,Zhao2023arcl, dong2022zood}, which demands deeper understanding of the SSCL methods.

The goal of SSCL is to learn the feature representations from data. For this problem, one classic method is SNE \citep{Hinton06} and its various extensions. Specially, $t$-SNE \citep{van2008visualizing} has become the go-to choice for low-dimensional data visualization. 
Comparing to SSCL, SNE is far better explored in terms of theoretical understanding \citep{arora2018analysis, linderman2019clustering, cai2021theoretical}. 
However, its empirical performance is not satisfactory, especially in modern era where data are overly complicated. 
Both trying to learn feature representations, are there any deep connections between SSCL and SNE? 
Can SSCL take the advantage of the theoretical soundness of SNE? Can SNE be revived in the modern era by incorporating SSCL?

In this work, we give affirmative answers to the above questions and demonstrate how the connections to SNE can benefit the theoretical understandings of SSCL, as well as provide methodological guidelines for practical improvement. The main contributions are summarized below.

\textbullet\quad We propose a novel perspective that interprets SSCL methods as a type of SNE methods with the aim of preserving pairwise similarities specified by the data augmentation.

\textbullet\quad The discovered connection enables deeper understanding of SSCL methods. We provide novel theoretical insights for domain-agnostic data augmentation, implicit bias and OOD generalization. 
Specifically, we show isotropic random noise augmentation induces $l_2$ similarity while mixup noise can potentially adapt to low-dimensional structures of data; 
we investigate the implicit bias from the angle of order preserving and identified the connection between minimizing the expected Lipschitz constant of the SSCL feature map and SNE with uniformity constraint; 
we identify that the popular cosine similarity can be harmful for OOD generalization.
    
\textbullet\quad
Motivated by the SNE perspective, we propose several modifications to existing SSCL methods and demonstrate practical improvements. 
Besides a re-weighting scheme, we advocate to lose the spherical constraint for improved OOD performance and a $t$-SNE style matching for improved separation. Through comprehensive numerical experiments, we show that the modified $t$-SimCLR outperforms the baseline with 90\% less feature dimensions on CIFAR-10 and $t$-MoCo-v2 pretrained on ImageNet significantly outperforms in various domain transfer and OOD tasks.

\section{Preliminary and related work}\label{sec:pre}
\textbf{Notations. } 
For a function $f:\Omega\to\mathbb{R}$, let
$\|f\|_\infty=\sup_{\bx\in\Omega}|f(\bx)|$ and $\|f\|_p=(\int_{\Omega} |f(\bx)|^p d\bx)^{1/p}$. 
For a vector $\bx$, $\|\bx\|_p$ denotes its $p$-norm, for $1\leq p \leq \infty$. 
$\PP(A)$ is the probability of event $A$.
For a random variable $z$, we use $P_z$ and $p_z$ to denote its probability distribution and density respectively. 
Denote Gaussian distribution by $N(\mu,\Sigma)$ and let $\bI_d$ be the $d\times d$ identity matrix. 
Let the dataset be $\cD_n=\{\bx_1,\cdots,\bx_n\}\subset \RR^d$ where each $\bx_i$ independently follows distribution $P_{\bx}$. The goal of unsupervised representation learning is to find informative low-dimensional features $\bz_1, \cdots, \bz_n \in \RR^{d_z}$ of $\cD_n$ where $d_z$ is usually much smaller than $d$. 
We use $f(\bx)$ to as the default notation for the feature mapping from $\RR^d\to\RR^{d_z}$, i.e., $\bz_i = f(\bx_i)$. 

\textbf{Stochastic neighbor embedding. } 
SNE \citep{hinton2002stochastic} is a powerful representation learning framework designed for visualizing high-dimensional data in low dimensions by preserving neighboring information. The training process can be conceptually decomposed into the following two steps: 
(1) calculate the pairwise similarity matrix $\bP\in\RR^{n\times n}$ for $\cD_n$; 
(2) optimize features $\bz_1,\cdots,\bz_n$ such that their pairwise similarity matrix $\bQ\in\RR^{n\times n}$ matches $\bP$.
Under the general guidelines lie plentiful details. 
In \cite{hinton2002stochastic}, the pairwise similarity is modeled as conditional probabilities of $\bx_j$ being the neighbor of $\bx_i$, which is specified by a Gaussian distribution centered at $\bx_i$, i.e., when $i\ne j$, 
\begin{equation}\label{eqn:sne_p}
    P_{j|i} = \frac{\exp(-\|\bx_i- \bx_j\|_2^2{/2\sigma_i^2})}{\sum_{k\ne i}\exp(-\|\bx_i- \bx_k\|_2^2{/2\sigma_i^2})},
\end{equation}
where $\sigma_i$ is the variance of the Gaussian centered at $\bx_i$. Similar conditional probabilities $Q_{j|i}$'s can be defined on the feature space.
When matching $\bQ$ to $\bP$, the measurement chosen is the KL-divergence between two conditional probabilities. The overall training objective for SNE is
\begin{align}
\label{eqn:sne}
   & \inf_{\bz_1,\cdots, \bz_n}\sum_{i=1}^n \sum_{j=1}^n P_{j|i}\log\frac{P_{j|i}}{Q_{j|i}}.
\end{align}
Significant improvements have been made to the classic SNE. \cite{im2018stochastic} generalized the KL-divergence to $f$-divergence and found that different divergences favors different types of structure. \cite{lu2019doubly} proposed to make $P$ doubly stochastic so that features are less crowded. Most notably, $t$-SNE \citep{van2008visualizing} modified the pairwise similarity by considering joint distribution rather than conditional, and utilizes t-distribution instead of Gaussian in the feature space modeling. 
{It is worth noting that SNE belongs to a large class of methods called manifold learning \citep{Li2022NeuralMC}. In this work, we specifically consider SNE.}
If no confusion arises, we use SNE to denote the specific work of \cite{hinton2002stochastic} and this type of methods in general interchangeably.

\textbf{Self-supervised contrastive learning. }
The key part of SSCL is the construction of positive pairs, or usually referred to as different views of the same sample. 
For each $\bx_i$ in the training data, denote its two augmented views to be $\bx_i^\prime$ {and $\bx_i^{\prime\prime}$}. Let $\cD_n^\prime=\{\bx_1^\prime,\cdots,\bx_n^\prime\},$ { $\cD_n^{\prime\prime}=\{\bx_1^{\prime\prime},\cdots,\bx_n^{\prime\prime}$\}} and define
\[
   l({\bx_i^{\prime}}, \bx_i^{\prime\prime}) =  -\log\frac{\exp(\mathrm{sim}(f({\bx_i^{\prime}}),  f(\bx_i^{\prime\prime}))/\tau)}{\sum_{\bx\in {\cD_n^{\prime}}\cup\cD_n^{\prime\prime} \backslash\{\bx_i^{\prime}\} }\exp(\mathrm{sim}(f({\bx_i^{\prime}}),  f(\bx))/\tau)}, 
\]
where $\mathrm{sim}(\bz_1, \bz_2)=\langle \frac{\bz_1}{\|\bz_1\|_2},\frac{\bz_2}{\|\bz_2\|_2}\rangle$ denotes the cosine similarity and $\tau$ is a temperature parameter.
The training objective of the popular SimCLR \citep{chen2020simple} can be written as $L_{{\mathrm{InfoNCE}}}:=\frac{1}{2n}\sum_{i=1}^n\rbr{l({\bx_i^{\prime\prime}}, \bx_i^\prime) + l(\bx_i^\prime, {\bx_i^{\prime\prime}})}$.

Recently, various algorithms are proposed to improve the above contrastive learning. To address the need for the large batch size, MoCo~\citep{he2020momentum,chen2020improved} utilizes a moving-averaged encoder and a dynamic memory bank to store negative representations, making it more device-friendly. ~\cite{grill2020bootstrap,chen2021exploring,zbontar2021barlow,chen2021multisiam} radically discard negative samples in SSCL but still achieve satisfactory transfer performance. 
Another line of works~\citep{caron2020unsupervised,li2021contrastive, zhili2022task} mines the hierarchy information in data to derive more semantically compact representations. \cite{radford2021learning, yao2021filip} even extend the contrastive methods to the multi-modality data structure to achieve impressive zero-shot classification results.

\paragraph{Theoretical understanding of SSCL.} 
In contrast of the empirical success, theoretical understanding of SSCL is still limited.
While most of theoretical works \citep{arora2019theoretical,tosh2020contrastive,haochen2021provable,haochen2022beyond, wang2022augmentation, wen2021toward,wei2020theoretical,huang2021towards, ji2021power, ma2023deciphering} focus on its generalization ability on downstream tasks,
there are some works studying specifically the InfoNCE loss.
One line of works \citep{oord2018representation,bachman2019learning,hjelm2018learning,tian2019contrastive,tian2020makes} understand the InfoNCE loss from mutual information perspective, showing that the negative InfoNCE is a lower bound of mutual information between positive samples.
Other works \citep{wang2020understanding,huang2021towards, jing2021understanding} are from the perspective of geometry of embedding space,  showing that InfoNCE can be divided into two parts: one controls alignment and the other prevents representation collapse.
In this paper, we study SSCL from the SNE perspective, which, to the best of the authors' knowledge, has no discussion in existing literature. The closest work to ours is \cite{balestriero2022contrastive}, which proposed a unifying framework under the helm of spectral manifold learning. In comparison, our work focus specifically on the connection between SSCL and SNE.

\section{SNE perspective of SSCL}
\label{sec:perspective}
A closer look at the training objectives of SNE and SimCLR reveals great resemblance --- SimCLR can be seen as a special SNE model.   
To see this, denote $\tilde{\cD}_{2n} = {\cD_n^{\prime\prime}}\cup\cD^\prime_n$ as the augmented dataset with index $\tilde{\bx}_{2i-1}={\bx_i^{\prime\prime}}$ and $\tilde{\bx}_{2i}=\bx_i^\prime$. 
If we change the $l_2$ distance to the negative cosine similarity and let $\sigma_i^2\equiv\tau$.
Admitting similar conditional probability formulation as in \eqref{eqn:sne_p} yields that for $i\ne j$,
\begin{equation}\label{eqn:qij}
    \tilde{Q}_{j|i} = \frac{\exp(\mathrm{sim}(f(\tilde{\bx}_i), f(\tilde{\bx}_j))/\tau)}{\sum_{k\ne i}\exp(\mathrm{sim}(f(\tilde{\bx}_i), f(\tilde{\bx}_k))/\tau)}.
\end{equation}
By taking
\begin{equation}
\label{eqn:pij}
    \tilde{P}_{j|i}= 
\begin{cases}
   \alert{1},& \text{if } \tilde{\bx_i} \text{ and } \alert{\tilde{\bx}_j} \text{ are positive pairs}\\
    0,              & \text{otherwise, }
\end{cases}
\end{equation}
the SNE objective (\ref{eqn:sne}) can be written as
\begin{align*}
    \sum_{i=1}^{2n} \sum_{j=1}^{2n}  \tilde{P}_{j|i}\log\frac{ \tilde{P}_{j|i}}{\tilde{Q}_{j|i}} 
   &=\sum_{k=1}^{n} \rbr{-\log(\tilde{Q}_{2k-1|2k}) - \log(\tilde{Q}_{2k|2k-1})},%
\end{align*}
which reduces to the SimCLR objective $L_{\mathrm{InfoNCE}}$, up to a \alert{constant scaling term} only depending on $n$.

Now that we have established the correspondence between SNE and SimCLR, it's clear that the feature learning process of SSCL also follows the two steps of SNE. 
\begin{itemize}
    \item[(S1)]
    The positive pair construction specifies the similarity matrix $\bP$. 
    
    \item[(S2)]
    The training process then matches $\bQ$ to $\bP$ by minimizing some divergence between the two specified by the training objective, e.g., KL divergence in SimCLR. 
\end{itemize}

The main difference between SNE and SSCL is the first part, where the $\bP$ in SNE is usually densely filled by $l_p$ distance,
ignoring the semantic information within rich data like images and texts. 
In contrast, SSCL omits all traditional distances in $\RR^d$ and only specifies semantic similarity through data augmentations, and the resulting $\bP$ is sparsely filled only by positive pairs as in \eqref{eqn:pij}. 
For structurally rich data such as image or text, the semantic information is invariant to a wide range of transformations. Human's prior knowledge of such invariance guides the construction of positive pairs in SSCL, which is then learned by the feature mapping. 
\begin{remark}[SNE vs SSCL]
We would like to clarify on the main difference between SNE and SSCL that we focus in this work.
Although standard SNE \citep{Hinton06} is \textit{non-parametric} without explicit feature maps, and is optimized for the \textit{whole dataset}, these are not the defining properties of SNE. SNE can also utilize explicit feature maps and mini-batch training \citep{van2009learning}. On the other hand, SSCL can also benefit from larger/full batches \citep{chen2020simple} and can also be modified to directly optimize the features $\bz_i$'s. 
In this work, we omit these subtleties\footnote{All the contrastive losses are written in full batches for simplicity in this work as we focus on analyzing the optimal solutions of SSCL methods rather than the optimization process. } and focus on the (S1) perspective, which we view as the most significant difference between SNE and SSCL.
\end{remark}

\subsection{Analysis}
\label{sec:analysis}
In this section, to showcase the utility of the SNE perspective, we demonstrate how the feature learning process of SSCL methods, e.g., SimCLR, can become more intuitive and transparent. 
Specifically, we re-derive the alignment and uniformity principle \citep{wang2020understanding} as well as provide novel analysis on domain-agnostic augmentations, the implicit bias and robustness of learned features. 
To aid the illustration, we device toy examples with simulated Gaussian mixture data.

\paragraph{Gaussian mixture setting.}
Let the data follow $d$-dimensional Gaussian mixture distribution with $m$ components where $P_{\bx} \sim \frac{1}{m}\sum_{i=1}^m N(\bmu_i, \sigma^2\bI_d)$. The special case with $d=2$, $m=5$, $\sigma=0.1$ is illustrated in Figure \ref{fig:toy}(a) with 250 independent samples. 
To apply contrastive methods, consider constructing positive pairs by direct sampling, i.e., if $\bx$ is from the first component, then we sample another $\bx^\prime\sim N(\bmu_1, \sigma^2\bI_d)$ independently as its alternative view for contrast. \alert{The negative samples are the same as in standard SimCLR training.}

\begin{figure}
    \centering
     \includegraphics[width=0.8\textwidth]{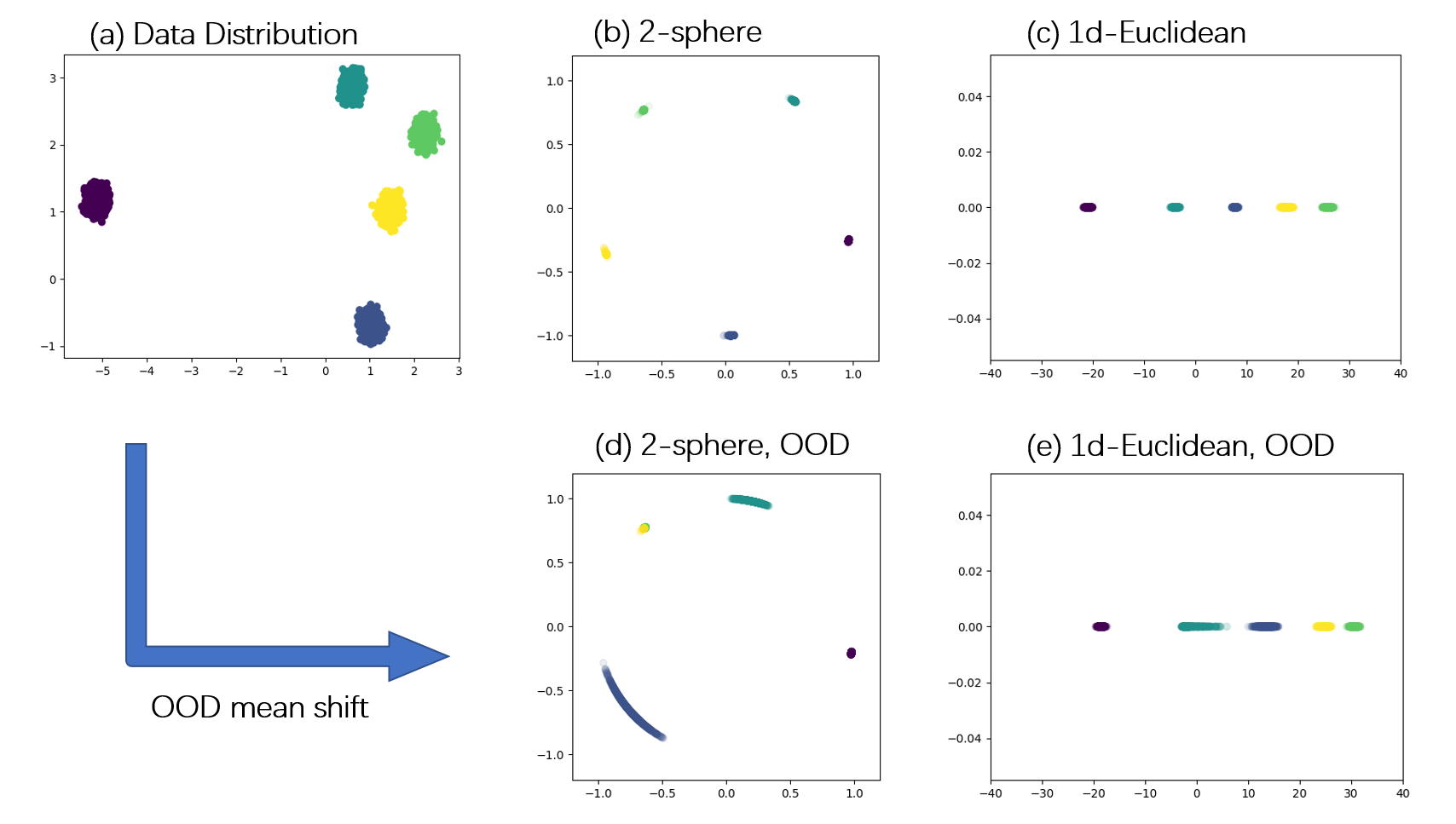}
    \caption{Gaussian mixture setting with 5 components. (a) illustration of data with 250 samples. (b) learned features by standard SimCLR with normalization (cosine similarity) to $1$-sphere. (c) learned features by modified SimCLR without normalization ($l_2$ similarity). (d, e) feature mapping of the two methods in case of OOD mean shift. The linear classification accuracy is 48.4\% in (d) and 100\% in (e).}
    \label{fig:toy}
\end{figure}

\subsubsection{Domain-agnostic data augmentation}
Now that we have established in (S1) that the input space pairwise distance is specified by the data augmentation, a natural question to ask is what are the corresponding \textit{induced distances}. In this section, we investigate this problem for {domain-agnostic} data augmentations.

The quality of data augmentation has great impact on the performance of SSCL methods, which reflects people's prior knowledge on the data. 
However, when facing new data without any domain knowledge, we have to rely on domain-agnostic data augmentations, e.g., adding random noises \citep{verma2021towards}, for contrast. 
We first consider using general random noise augmentation, i.e., for any $\bx\in\RR^d$, let $\bx^\prime = \bx+\delta$ where $\delta$ follows some distribution with density $\phi(\bx)$. 
\alert{Then, for any $\bx_i$, the probability density of having $\bt\in\RR^d$ as its augmented point can be characterized as $P_{\bt|\bx_i}=\PP(\bx_i \mbox{ and }\bx_i' = \bt$ form a positive pair$|\bx_i)=\phi(\bt-\bx_i)$.}
We have the following proposition on Gaussian-induced distance.

\begin{proposition}[Gaussian noise injection]
\label{prop:gauss}
If the noise distribution is isotropic Gaussian with mean zero, the induced distance is \textit{equivalent} to the $l_2$ distance in $\RR^d$, up to a monotone transformation. 
\end{proposition}

Another popular noise injection method is the mixup \citep{zhang2017mixup}, where the augmented data are comprised of convex combinations of the training data. For each $\bx_i$, a positive pair can be constructed from another $\bx_j$ such that $\bx_i^\prime = \bx_i + \lambda(\bx_j-\bx_i)$ and $\lambda\in(0,1)$ is the hyperparameter usually modeled with Beta distribution. For independent $\bx_1,\bx_2\sim P_x$, denote the convoluted density of $\lambda(\bx_1 -\bx_2)$ as $p_{\lambda}(\bx)$, which is symmetric around 0.
Then, if employing mixup for positive pairs in SSCL, the induced distance can be written as $P_{\bx_1,\bx_2} = P_{\bx_2,\bx_1} = p_\lambda(\bx_1-\bx_2)$.

\textbf{Gaussian vs. mixup. }
\citet{verma2021towards} proposed to use mixup when domain-specific information is unattainable and provided supportive analysis on its advantage over isotropic Gaussian noise from the classification generalization error point of view. 
Through (S1) perspective, we can intuitively explain why data-dependent mixup noises can be potentially better from the perspective of the ``\textit{curse of dimensionality}''. 
Consider the $d$-dimensional Gaussian mixture setting with $m<d$ separated components. Notice that $\bmu_1,\cdots,\bmu_m$ can take up at most $(m-1)$-dimensional linear sub-space of $\RR^d$. 
Denoted the space spanned by $\bmu_i$'s as $\bS_\mu$. For the light-tailed Gaussian distribution, and the majority of samples will be close to $\bS_\mu$. 
Hence, majority of the convoluted density $p_\lambda(\bx)$ will also be supported on $\bS_\mu$, so does the corresponding $P_{\bx_2,\bx_1}$. Thus, the induced distance from mixup will omit irrelevant variations in the complement of $\bS_\mu$ and focus on the low-dimensional sub-space $\bS_\mu$ where $\bmu_i$'s actually differ. This effectively reduces the dimension dependence from $d$ to $m-1$. 
In comparison, isotropic Gaussian noise induces $l_2$ distance for positive pairs with support of $\RR^d$, which will be much more inefficient, especially when $m\ll d$. 
Since it is well-known that the performance of regression or classification models is strongly influenced by the intrinsic dimension of the input space \citep{hamm2021adaptive}, keeping the data in a low-dimensional space is preferable.

\subsubsection{Alignment and uniformity}\label{subsubsec:AandU}
Characterizing the learned features of SSCL is of critical importance. 
\cite{wang2020understanding} proposed alignment and uniformity as principles for SimCLR type contrastive learning methods.
Such results can be intuitively understood through the perspective of (S1) and (S2).  
Consider the common case where the feature space is $(d_z-1)$-sphere. 
First, \eqref{eqn:pij} indicates that only similarities (distances) between positive pairs are non-zero (finite) and all other pairwise similarities (distances) are zero (infinity). 
Preserving \eqref{eqn:pij} requires the features of positive pairs to align (cosine similarity tends to 1) and those of negative pairs to be as distant as possible. 
If in the extreme case where positive pairs match exactly, i.e., $f(\bx_i)=f(\bx_i^\prime)$ for any $i=1,\cdots,n$, we call it \textit{perfect alignment}. 

If perfect alignment is achieved and the features are constrained on the unit sphere, matching \eqref{eqn:pij} implies pushing $n$ points on the feature space as distant as possible. 
Maximally separated $n$ points on a $d$-sphere has been studied in geometry, known as the Tammes problem \citep{tammes1930origin, erber1991equilibrium, melisseny1998different}. 
We say \textit{perfect uniformity} is achieved if all the pairs are maximally separated on the sphere. 
There are some simple cases of the Tammes problem. 
If $d=2$, perfect uniformity can be achieved if the mapped points form a regular polygon. 
If $d\ge n-1$, the solution can be given by the vertices of an $(n-1)$-simplex, inscribed in an $(n-1)$-sphere embedded in $\RR^d$. The cosine similarity between any two vertices is $-1/(n-1)$ and in this case, $L_{\mathrm{InfoNCE}}$ can attain its lower bound\footnote{Notice that in this case, the optimal feature mapping will contain little information of the data, mapping anchor samples to interchangeable points with identical pairwise distances}.
As $n\to\infty$, the point distribution converges weakly to uniform distribution.
As can be seen in Figure \ref{fig:toy}(a, b), perfect alignment and perfect uniformity are almost achieved by standard SimCLR in the Gaussian mixture setting. 
 
As we will demonstrate in Section \ref{sec:target-ood} that the spherical feature space can be bad for OOD generalization, adopting of the Euclidean space will change the statement of the uniformity property and can also be analyzed from the SNE perspective. Details can be found in Appendix \ref{sec:normalization}.

\subsubsection{Implicit bias}\label{sec:bias}
Existing theoretical results on SSCL provide justification of its empirical success in classification. However, there is more to it than just separating different classes and many phenomena are left unexplained. 
Take the popular SimCLR \citep{chen2020simple} on CIFAR-10 as an example, we can consistently observe that the feature similarities within animals (bird, cat, deer, dog, frog, horse) and within objects (airplane, automobile, ship, truck), are significantly higher than those between animals and objects\footnote{Figure \ref{fig:cos} illustrates the phenomenon. Details can be found in Appendix \ref{app:ib}}. 
This can be viewed as an implicit bias towards preserving semantic information, which might be surprising as we have no supervision on the label information during the training process.
However,    
\alert{existing literature on implicit bias is scarce. As advocated in \cite{saunshi2022understanding}, ignoring inductive biases cannot adequately explain the success of contrastive learning.} 
In this section, we provide a simple explanation from the perspective of SNE. 

For a more concrete illustration, consider training SimCLR in the Gaussian mixture setting with $d=1, \ d_z = 2, \ m=4, \ \mu_i=i$, and $ \sigma=0.1$. Denote the 4 components in ascending order by A,B,C,D. Perfect alignment and uniformity imply that their feature maps (a, b, c, d) on the unit-circle should be vertices of an inscribed square. 
What left unsaid is their \textit{relative order}.
Clockwise or counter-Clockwise from a, regardless of the initialization, we can observe SimCLR to consistently produce the order a $\to$ b$\to$ c$\to$ d.

\begin{remark}[Relative ordering and neighbor-preserving]
The order-preserving property showcased with $d=1$ is mainly for illustration, as in one-dimension, the neighboring info is simplified as the order, which is much easier to understand. The results remain the same in high dimensions as long as the clusters are well separated with an obvious order of clusters.
For instance, some relative orders in Figure \ref{fig:toy}(a,b) are also stable, e.g., the neighbor of blue will consistently be purple and yellow. 
\end{remark}

With great resemblance to SNE, SSCL methods also exhibit neighbor-preserving property and we identify it as an implicit bias. Such implicit bias can be universal in SSCL and the phenomenon in Figure \ref{fig:cos} is also a manifestation. 
In deep learning, the implicit bias is usually characterized by either closeness to the initialization \citep{moroshko2020implicit, azulay2021implicit}, or minimizing certain complexity \citep{razin2020implicit,zhang2021understanding}. 
In the case of SimCLR, we hypothesize the implicit bias as the \textit{expected Lipschitz constant}, which has deep connections to SNE with uniformity constraint. For a feature map $f$ onto the unit-sphere, define
\begin{equation}\label{eqn:cf}
    C(f) = \EE_{\bx,\bx^\prime}\frac{\|f(\bx)-f(\bx^\prime)\|_2}{\|\bx-\bx^\prime\|_2},
\end{equation}
where the $\bx_1,\bx_2$ are independent samples from the data distribution. 
{
\begin{definition}
[SNE with uniformity constraint]
\label{def:sne_uniform}
Assume data $\bx_1,\cdots, \bx_n\in\RR^d$. If the corresponding SNE features $\bz_1,\cdots,\bz_n\in\RR^{d_z}$ are constrained to be the maximally separated $n$ points on the $(d_z-1)$-sphere, we call this problem \textit{SNE with uniformity constraint}. 
\end{definition}
The key of SNE is matching the pairwise similarity matrices $Q$ to $P$. When solving SNE with uniformity constraint, the only thing to be optimized is the pairwise correspondence, or ordering of the mapping.
We have the following theorem that links the neighbor-preserving property to $C(f)$. 
\begin{theorem}
\label{lemma2}
Let $\bx_1,\cdots,\bx_n\in\RR^d$ such that $\|\bx_i-\bx_j\|_2 > 0$ for any $i,j$ and let $\bz_1,\cdots,\bz_n\in\RR^{d_z}$ be maximally separated $n$ points on the $(d_z-1)$-sphere. Denote $P=(p_{ij})_{n\times n}$ and $Q=(q_{ij})_{n\times n}$ as the corresponding pairwise similarity matrices of $\bx_i$'s and $\bz_i$'s respectively. 
Let $\pi$ denote a permutation on $\{1,\cdots, n\}$ and denote all such permutations as $T$.
Let $Q^\pi$ as the $\pi$-permuted matrix $Q$ and define
\[
C_1(P,\ Q^{\pi}) = \sum_{i\ne j} \frac{q_{\pi(i)\pi(j)}}{p_{ij}} \quad\mbox{and}\quad \pi^* = \argmin_{\pi\in T}\ C_1(P,\ Q^{\pi}).
\]
Then, $\pi^*$ also minimizes $\|\bar{P}- {Q}^{\pi}\|_F$ where $\|\cdot\|_F$ is the Frobenius norm and $\bar{P} = (\bar{p}_{ij})_{n\times n}$ is a (monotonically) transformed similarity matrix with $\bar{p}_{ij} = -1/p_{ij}$. 
\end{theorem}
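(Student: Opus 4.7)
The plan is to expand the squared Frobenius distance $\|\bar{P}-Q^\pi\|_F^2$ and observe that the uniformity of $\bz_1,\dots,\bz_n$ -- more precisely, just the fact that an index permutation rearranges rather than alters the entries of $Q$ -- eliminates every $\pi$-dependent term except a bilinear cross term, which turns out to be exactly $-C_1(P,Q^\pi)$ after substituting $\bar{p}_{ij}=-1/p_{ij}$. So the whole theorem reduces to two elementary observations plus a sign tracking.

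Concretely, adopting the standard SNE convention that all diagonal entries of similarity matrices are zero (with the associated convention $\bar{p}_{ii}:=0$, consistent with $C_1$ being summed over $i\ne j$), I would write
\[
\|\bar{P} - Q^\pi\|_F^2 \;=\; \|\bar{P}\|_F^2 \;-\; 2\sum_{i\ne j} \bar{p}_{ij}\,q_{\pi(i)\pi(j)} \;+\; \|Q^\pi\|_F^2.
\]
The first term is independent of $\pi$. For the third term, since the map $(i,j)\mapsto(\pi(i),\pi(j))$ is a bijection on index pairs,
\[
\|Q^\pi\|_F^2 \;=\; \sum_{i,j} q_{\pi(i)\pi(j)}^2 \;=\; \sum_{k,l} q_{kl}^2 \;=\; \|Q\|_F^2,
\]
which is likewise independent of $\pi$. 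Hence minimizing $\|\bar{P}-Q^\pi\|_F$ over $\pi$ is equivalent to maximizing the cross sum, and substituting $\bar{p}_{ij}=-1/p_{ij}$ gives
\[
\sum_{i\ne j}\bar{p}_{ij}\,q_{\pi(i)\pi(j)} \;=\; -\sum_{i\ne j}\frac{q_{\pi(i)\pi(j)}}{p_{ij}} \;=\; -\,C_1(P,Q^\pi),
\]
so the argmin of $C_1(P,Q^\pi)$ coincides with the argmin of $\|\bar{P}-Q^\pi\|_F$, as claimed.

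There is no genuine obstacle; the only minor subtlety is the book-keeping on the diagonal, handled by agreeing that $p_{ii}=q_{ii}=0$ and $\bar{p}_{ii}=0$ so that the $i=j$ contributions vanish uniformly across $\pi$. The substantive content is the observation that maximal separation of $\bz_1,\dots,\bz_n$ renders $\|Q^\pi\|_F^2$ a permutation invariant, so Frobenius matching collapses to a linear assignment problem whose score matrix is precisely the monotone transform $-1/p_{ij}$ of the input similarities -- this is the sense in which SNE under a uniformity constraint ``preserves structure'' through the order information in $P$.
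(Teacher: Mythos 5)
Your proposal is correct and follows essentially the same route as the paper's proof: expand the squared Frobenius distance, observe that the two quadratic terms are permutation-invariant (the paper asserts $\sum_{i\ne j}q_{\pi(i)\pi(j)}^2=\sum_{i\ne j}q_{ij}^2$ where you justify it via the bijection on index pairs), and identify the remaining cross term with $\pm 2\,C_1(P,Q^\pi)$ after substituting $\bar{p}_{ij}=-1/p_{ij}$. Your explicit handling of the diagonal convention is a small tidiness improvement over the paper's write-up, but the argument is the same.
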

Theorem \ref{lemma2} showcases the relationship between minimizing $C(f)$ and the structure preserving property by considering a special SNE problem, where the pairwise similarity is not modeled by Gaussian as standard. Although $q_{ij}=-\|f(\bx_i)-f(\bx_j)\|_2$ is unorthodox, it is reasonable since the larger the distance, the smaller the similarity.
We have the following corollary to explain the neighbor-preserving property of SSCL and the implicit bias associated with minimizing the complexity $C(f)$. 
}

\begin{corollary}[Implicit bias of SSCL]
\label{lemma1}
When SSCL model achieves perfect alignment and perfect uniformity, if the complexity $C(f)$ is minimized, the resulting feature map preserves pairwise distance in the input space, resembling SNE with uniformity constraint.   
\end{corollary}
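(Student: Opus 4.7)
The plan is to reduce the setting of the corollary to the finite, discrete optimization already solved by Theorem \ref{lemma2} and then invoke that theorem directly. First, perfect alignment collapses every augmentation pair $f(\bx_i),f(\bx_i^\prime)$ into a single feature $\bz_i := f(\bx_i)$, so the learned representation on the training set reduces to $n$ points on $\mathbb{S}^{d_z}$. Perfect uniformity then forces $\{\bz_1,\ldots,\bz_n\}$ to be (a copy of) the maximally separated $n$-point configuration of the Tammes problem discussed in Section \ref{subsubsec:AandU}, which is determined up to a rigid orthogonal transformation. Hence the only remaining degree of freedom of $f$ on the sample is the assignment $\pi\in T$ between data indices and the fixed sphere locations; this is exactly the setup of Definition \ref{def:sne_uniform}.

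Second, I would plug this parametrization into the empirical counterpart of the complexity functional,
\[
\hat{C}(f)\;=\;\frac{1}{n(n-1)}\sum_{i\neq j}\frac{\|f(\bx_i)-f(\bx_j)\|_2}{\|\bx_i-\bx_j\|_2}\;=\;\frac{1}{n(n-1)}\sum_{i\neq j}\frac{\|\bz_{\pi(i)}-\bz_{\pi(j)}\|_2}{\|\bx_i-\bx_j\|_2}.
\]
Taking $p_{ij}=\|\bx_i-\bx_j\|_2$ and $q_{ij}=-\|\bz_i-\bz_j\|_2$ (so that larger $q_{ij}$ encodes greater similarity, matching the convention of Theorem \ref{lemma2}), the displayed sum equals, up to the positive factor $1/[n(n-1)]$ and a global sign, the quantity $C_1(P,Q^{\pi})$. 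Consequently, minimizing $\hat{C}(f)$ over all feature maps consistent with perfect alignment and perfect uniformity is equivalent to $\min_{\pi\in T} C_1(P,Q^{\pi})$, i.e., the combinatorial problem in Theorem \ref{lemma2}.

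Third, I invoke Theorem \ref{lemma2}: the optimal permutation $\pi^{*}$ simultaneously minimizes $\|\bar{P}-Q^{\pi}\|_F$ with $\bar{p}_{ij}=-1/p_{ij}$. This Frobenius matching is exactly an SNE objective in which (i) the feature configuration is locked to the maximally separated Tammes points, i.e., SNE with uniformity constraint in the sense of Definition \ref{def:sne_uniform}, and (ii) the input-space similarity is the monotone transform $\bar{p}_{ij}$ of pairwise input distances. Therefore the feature map produced by minimizing $C(f)$ under perfect alignment and perfect uniformity is an optimizer of an SNE-with-uniformity-constraint instance on the sample, which precisely captures what Corollary \ref{lemma1} means by ``preserves pairwise distance.''

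The main obstacle I anticipate is bookkeeping rather than depth: one must fix the sign/monotonicity conventions linking ``distance'' and ``similarity'' across $C(f)$ (a ratio of distances), $C_1$ (a ratio of similarities), and $\bar{P}$ (a monotone transform), so that the three minimizations line up rather than inadvertently becoming maximizations. A secondary, routine point is replacing the population complexity $C(f)$ by its symmetric empirical U-statistic on the training sample, which is needed because Theorem \ref{lemma2} is stated for a finite set of points and because the notion of ``preserving pairwise distance'' is naturally phrased on the sample.
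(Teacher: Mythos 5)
Your overall route is the same as the paper's: under perfect alignment and perfect uniformity the only remaining freedom in $f$ on the sample is the permutation matching data indices to the fixed maximally-separated configuration, the empirical complexity becomes a function of that permutation alone, and Theorem \ref{lemma2} is then invoked. The paper's proof additionally re-derives that minimizing $L_{\mathrm{InfoNCE}}$ forces alignment and uniformity (which your version rightly treats as hypotheses), but the core reduction is identical.

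There is, however, one concrete error in your step two, and it is exactly the sign issue you flag at the end but do not resolve. With your convention $p_{ij}=\|\bx_i-\bx_j\|_2>0$ and $q_{ij}=-\|\bz_i-\bz_j\|_2\le 0$, you get
\[
C_1(P,Q^{\pi})=\sum_{i\ne j}\frac{q_{\pi(i)\pi(j)}}{p_{ij}}=-\sum_{i\ne j}\frac{\|\bz_{\pi(i)}-\bz_{\pi(j)}\|_2}{\|\bx_i-\bx_j\|_2}=-n(n-1)\,\hat{C}(f),
\]
so minimizing $\hat{C}(f)$ is equivalent to \emph{maximizing} $C_1(P,Q^{\pi})$, and your claimed equivalence to $\min_{\pi}C_1(P,Q^{\pi})$ is inverted. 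The paper avoids this by taking \emph{both} entries as negative distances, $p_{ij}=-\|\bx_i-\bx_j\|_2$ and $q_{ij}=-\|f(\bx_i)-f(\bx_j)\|_2$, so that the ratio $q_{\pi(i)\pi(j)}/p_{ij}$ is positive and $C_1(P,Q^{\pi})$ coincides with the empirical complexity without a sign flip (this also meshes with the hypothesis $\|\bx_i-\bx_j\|_2>0$ and the transform $\bar{p}_{ij}=-1/p_{ij}$ in Theorem \ref{lemma2}). With that one change your argument goes through as the paper's does. A minor additional caveat: uniqueness of the Tammes configuration up to orthogonal transformation is not needed (and not known in general); the argument only requires that the feature points be some maximally separated configuration, per Definition \ref{def:sne_uniform}.
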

Corollary \ref{lemma1} links the implicit bias of SSCL to the SNE optimization with uniformity constraint. 
In the case of perfect alignment and perfect uniformity, SSCL can be seen as a special SNE problem where the feature $\bz_1, \cdots, \bz_n$ must be maximally separated on the unit-sphere. 
Recall the 1-dimension Gaussian case. There are in total $3!=6$ different orderings for the 4 cluster means, among which, a $\to$ b$\to$ c$\to$ d will give the lowest SNE loss. As can be seen in Figure \ref{fig:gmm}, both $C(f)$ and the SNE loss are monotonically decreasing during training for the Gaussian mixture setting. 

When the alignment or uniformity is not perfect, the resulting feature mapping can still be characterized via SNE, with the uniformity constraint relaxed as a form of regularization. 
\alert{In our numerical experiments on the CIFAR-10 data, we observe $C(f)$ to be monotonically decreasing during the training process, supporting our hypothesis. More details can be found in Appendix \ref{sec:cf}.} 
Corollary \ref{lemma1} sheds light on the implicit semantic information preserving phenomenon shown in Figure \ref{fig:cos}, as in the input space, images of dogs should be closer to images of cats, than airplanes.

\subsubsection{Targeting OOD: Euclidean vs spherical} 
\label{sec:target-ood}
Almost all SSCL methods require normalization to the unit-sphere and the similarity on the feature space is often the cosine similarity. 
In comparison, standard SNE methods operate freely on the Euclidean space. 
In this section, we show that the normalization can hinder the structure-preserving and there is a fundamental \textit{trade off} between in-distribution and out-of-domain generalization.

Consider the $2$-dimensional Gaussian mixture setting as illustrated in Figure \ref{fig:toy}(a). 
Notice that as long as the mixing components are well separated, the learned feature mapping on the sphere will always be the pentagon shape, regardless of the relative locations of the clusters. 
This is a result of the uniformity property derived under spherical constraint.
Distant clusters in the input space will be pulled closer while close clusters will be pushed to be more distant, which results in the trade off between in-distribution and out-of-domain generalization. 
On one hand, close clusters being more separated in the feature space is potentially beneficial for in-distribution classification. 
On the other hand, the spherical constraint adds to the complexity of the feature mapping, potentially hurting robustness. 

In the Euclidean space, pushing away negative samples (as distant as possible) will be much easier, since the feature vectors could diverge towards infinity\footnote{In practice, various regularization, e.g, weight decay, are employed and the resulting features will be bounded. } and potentially preserve more structural information.
To verify our intuition, we relax the spherical constraint in the Gaussian mixture setting and change the cosine similarity in SimCLR to the negative $l_2$ distance in $\RR$. The learned features are shown in Figure \ref{fig:toy}(c). 
Comparing to Figure \ref{fig:toy}(b), we can get the extra information that the purple cluster is far away to the others. 
If we introduce a small mean shift to the data, moving the distribution along each dimension by 1, the resulting feature maps differ significantly in robustness. As illustrated in Figure \ref{fig:toy}(d) vs. (e), the standard SimCLR are much less robust to OOD shifts and the resulting classification accuracy degrades to only 48.4\%, while that for the modified SimCLR remains 100\%. 
The same OOD advantage can also be verified in \alert{the CIFAR-10 to CIFAR-100 OOD generalization case (details in Appendix \ref{sec:df} Figure \ref{fig:extend_acc})} and large-scale real-world scenarios with MoCo \citep{chen2020improved} as baseline (details in Section \ref{sec:exp}).

\section{Improving SSCL by SNE} \label{sec:improve}
The proposed SNE perspective (S1,S2) can inspire various modifications to existing SSCL methods.  
In this section, we choose SimCLR as our baseline and investigate three straightforward modifications.
For empirical evaluation, we report the test classification accuracy of nearest neighbor classifiers on both simulated data and real datasets. Experiment details can be found in Appendix \ref{app:exp}.

\begin{figure}
    \centering
     \subfigure[Weighted SimCLR.]{
     {\includegraphics[width=0.45\textwidth]{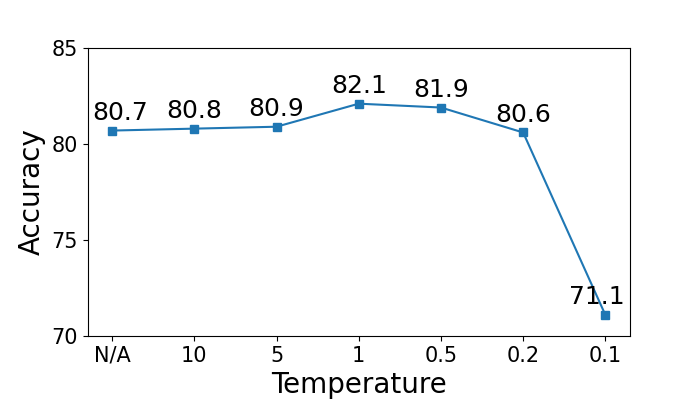}}}
     \hspace{2mm}
     \subfigure[SimCLR vs. $t$-SimCLR.]{
     {\includegraphics[width=0.45\textwidth]{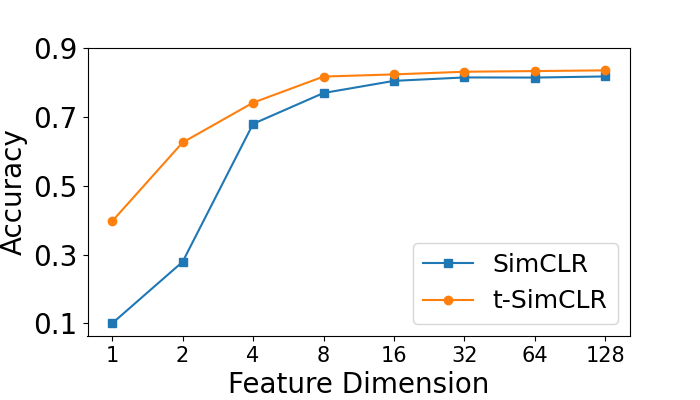}}}
     \hspace{1mm}
    \caption{Nearest neighbor classification test accuracy on CIFAR-10 with ResNet-18 after 200 epochs pre-training. (a) \textit{"N/A"} stands for the baseline SimCLR. The $x$-axis is the temperature for IoU weighting scheme. (b) Comparison between SimCLR and $t$-SimCLR with different feature dimensions. }
    \label{fig:cifar_acc}
\end{figure}

\subsection{Weighted positive pairs}\label{sec:weigted}
In practice, positive pairs are constructed from anchors (training data), by i.i.d. data augmentations, e.g., random resized crop, random horizontal flip, color jitter, etc. 
Take random crop as an example, pair 1 and 2 may be from 30\%, 80\% random crops, respectively. Their similarities should not be treated as equal, as in typical SSCL methods. 
Incorporating the disparity in the data augmentation process is straightforward in the perspective of SNE, where the InfoNCE loss can be naturally modified as
\begin{align*}
    \frac{1}{2n}\sum_{i=1}^n \ p_{ii^\prime}\cdot \rbr{l(\bx_i, \bx_i^\prime) + l(\bx_i^\prime, \bx_i)}.
\end{align*}
The weight $p_{ii^\prime}$ in $P$ can be specified manually to reflect human's prior knowledge. 
To test out the effect of such modification, we conduct numerical experiments on CIFAR-10 using the standard SimCLR. The weighting scheme is based on the Intersection over Union (IoU) of random resized crops. For each positive pair, let $p_{ii^\prime}\propto \exp(\mathrm{IoU}(\bx_i, \bx_i^\prime)/{\tau^\prime}),$ where ${\tau^\prime} > 0$ is a hyperparameter (temperature) controlling the strength of the weighting scheme, i.e., the bigger the ${\tau^\prime}$, the closer to the unweighted state. The CIFAR-10 test performance vs. ${\tau^\prime}$ is shown in Figure \ref{fig:cifar_acc}(a). The baseline is 80.7\% and can be significantly improved to 82.1\% if choosing ${\tau^\prime}=1$.

\subsection{t-SimCLR: $t$-SNE style matching}
\label{sec:t-simclr}
Most SSCL algorithms differ mainly in (S2), i.e., defining $\bQ$ and matching it to $\bP$, where fruitful results in SNE literature can be mirrored and applied. 
Now that we have identified the advantage of modeling features in Euclidean spaces in Section \ref{sec:target-ood}, the most promising modification that follows is to introduce $t$-SNE to SimCLR. 
Since we are learning low-dimensional features from high-dimensional data, preserving all pairwise similarities is impossible and the features tend to collapse. This is referred to as the \textit{``crowding problem''} in \cite{van2008visualizing} (see Section 3.2 therein). 
$t$-SNE utilizes the heavy-tail t-distribution instead of the light-tail Gaussian, to model $\bQ$ and encourage separation in feature space.  
Correspondingly, the training objective $L_{\mathrm{InfoNCE}}$ can be modified as 
\begin{align}\label{eqn:tsimclr}
    &\frac{1}{n}\sum_{i=1}^n-\log\frac{\rbr{1+\|f(\bx_i)-  f(\bx_i^\prime)\|^2_2/(\tau\  t_{df})}^{-(t_{df}+1)/2}}{\sum_{1\le j\ne k\le 2n}\rbr{1+\|f(\tilde{\bx}_j)-  f(\tilde{\bx}_k)\|^2_2/(\tau\  t_{df})}^{-(t_{df}+1)/2}},
\end{align}
where $t_{df}$ is the degree of freedom for the $t$-distribution.
Besides substituting the cosine similarity to the $l_2$ distance, the key modification is the modeling of feature space similarity $\bQ$, from Gaussian to $t$-distribution as suggested by \cite{van2008visualizing} to avoid the crowding problem and accommodate the dimension-deficiency in the feature space. 
We call the modified method \textit{$t$-SimCLR} and we expect it to work better, especially when the feature dimension is low, or in the OOD case. 

Figure \ref{fig:cifar_acc}(b) shows the comparison between SimCLR and $t$-SimCLR on CIFAR-10 with different feature dimensions, where $t$-SimCLR has significant advantages in all cases and the smaller the $d_z$, the larger the gap. Without decreasing the standard $d_z=128$, $t$-SimCLR improves the baseline from 80.8\% to 83.9\% and even beats it using only $d_z=8$ with accuracy 81.7\%.

\begin{remark}[Degree of freedom]
Standard $t$-SNE utilizes $t$-distribution with $t_{df}=1$, to better accommodate the extreme $d_z=2$ case. In practice, $t_{df}$ can vary and 
as $d_z$ increases, larger $t_{df}$ might be preferred. 
We recommend using $t_{df}=5$ as the default choice. The performance of $t_{df}$ vs $d_z$ can be found in Appendix \ref{app:exp}, as well as discussion on the fundamental difference between $t_{df}$ and $\tau$. 
\end{remark}

\begin{remark}[Training epochs]
For the CIFAR-10 experiments, we reported the results of ResNet-18 after 200 training epochs, similar to the setting of \cite{yeh2021decoupled}. We also conducted 1000-epoch experiments and found that our modifications provide consistent improvements throughout the training process, not in terms of speeding up the convergence, but converging to better solutions.   
Details can be found in Appendix \ref{sec:cifar} and Figure \ref{fig:1000}.
\end{remark}

\section{Large scale experiments} \label{sec:exp}
In this section, we apply the same modifications proposed in Section \ref{sec:t-simclr} to MoCo-v2~\citep{chen2020improved}, as it is more device-friendly to conduct large scale experiments. 
We name our model \textbf{$t$-MoCo-v2}. 
Both models are pre-trained for 200 epochs on ImageNet following the setting of~\cite{chen2020improved}. The linear probing accuracy of $t$-MoCo-v2 on ImageNet is 67.0\%, which is comparable to the MoCo result 67.5\%.
With the same level of in-distribution classification accuracy, we conduct extensive experiments to compare their OOD performance. The results in Table \ref{tab:iid_dataset} and \ref{tab:ood_dataset} suggest that our modification significantly improves the domain transfer and the OOD generalization ability without sacrificing in-distribution accuracy.

\paragraph{Domain Transfer.} We first conduct experiments on the traditional self-supervision domain transfer benchmark. We compare MoCo-v2 and $t$-MoCo-v2 on Aircraft, Birdsnap, Caltech101, Cars, CIFAR10, CIFAR100, DTD, Pets, and SUN397. We follow transfer settings in~\cite{ericsson2021well} to finetune the pre-trained models.   
The results are reported in Table~\ref{tab:iid_dataset}. Our model $t$-MoCo-v2 surpasses MoCo-v2 in 8 out of 9 datasets, showing a significantly stronger transfer ability. 
Notice that our model is pre-trained with 200 epochs, surprisingly, compared with the original MoCo-v2 model pre-trained with 800 epochs, the fine-tuning results of $t$-MoCo-v2 are still better on Birdsnap, Caltech101, CIFAR100, and SUN397.
\paragraph{Out-of-domain generalization.}
As illustrated in Section \ref{sec:target-ood}, standard SSCL methods, e.g., SimCLR, MoCo, etc., could suffer from OOD shift. To demonstrate the advantage of our modification, we investigate the effectiveness of our method on OOD generalization benchmarks: PACS~\cite{Li2017DeeperBA}, VLCS~\cite{Fang2013UnbiasedML}, Office-Home~\cite{Venkateswara2017DeepHN}. 
We follow the standard way to conduct the experiment, i.e., choosing one domain as the test domain and using the remaining domains as training domains, which is named the leave-one-domain-out protocol.
As can be seen in Table~\ref{tab:ood_dataset}, our $t$-MoCo-v2 indicates significant improvement over MoCo-v2. Both experiments indicate our modification exhibits substantial enhancement for domain transfer and OOD generalization ability. Similar to domain transfer scenario, compared with the original MoCo-v2 model pre-trained with 800 epochs, $t$-MoCo-v2 is better on all of the three datasets. More experiment details, including detailed comparisons, are in Appendix \ref{app:exp}.

\begin{table}[t]
\centering
\caption{Domain transfer results of vanilla MoCo-v2 and $t$-MoCo-v2.}
\label{tab:iid_dataset}
\resizebox{1\textwidth}{!}{
\begin{tabular}{l|ccccccccc|c}
\toprule
Method   & Aircraft &  Birdsnap &  Caltech101 &  Cars   & CIFAR10 & CIFAR100 & DTD   & Pets  & SUN397 & Avg. \\
\midrule
MoCo-v2 & 82.75    &  44.53    &   83.31     &  85.24  & 95.81   & 72.75   & \textbf{71.22} & 86.70 & 56.05  & 75.37 \\
$t$-MoCo-v2   & \textbf{82.78}    &  \textbf{53.46}    &   \textbf{86.81}     &  \textbf{86.17}  & \textbf{96.04}   & \textbf{78.32}    & 69.20 & \textbf{87.95} & \textbf{59.30}  & \textbf{77.78} \\
\bottomrule
\end{tabular}}
\end{table}

\begin{table}[t]
\centering
\caption{OOD accuracies of vanilla MoCo-v2 and $t$-MoCo-v2 on domain generalization benchmarks.}
\label{tab:ood_dataset}
\footnotesize
\begin{tabular}{l|ccc|c}
\toprule
Method     &  PACS &   VLCS &  Office-Home & Avg.\\
\midrule
MoCo-v2    &  58.5 &  70.4 &  36.6  & 55.2 \\
$t$-MoCo-v2       &  \textbf{61.3} &  \textbf{75.1} &  \textbf{42.1}  & \textbf{59.5} \\
\bottomrule
\end{tabular}
\end{table}

\section{Discussion}\label{sec:discuss}
This work proposes a novel perspective that interprets SSCL methods as a type of SNE methods, which facilitates both deeper theoretical understandings and methodological guidelines for practical improvement. More interpretations of SSCL from preserving the distance between distributions can be found in Appendix \ref{sec:distance}.
Our analysis has limitations and the insights from SNE are not universally applicable for all SSCL methods, e.g., \cite{zbontar2021barlow, yang2021instance} don't fit in our framework. However, this work is an interesting addition to existing theoretical works of SSCL and more investigations can be made along this path. 
While there are various extensions of the classic SNE, in this work, as a proof of concept, we mainly showcased practical improvements from $t$-SNE. We expect more modifications can be developed by borrowing advances in the SNE literature, e.g., changing to $f$-divergences \citep{im2018stochastic} or consider optimal transport \cite{bunne2019learning, salmona2021gromov, mialon2020trainable}. 
\alert{On the other hand, standard SNE methods can also borrow existing techniques in SSCL to improve their performance on more complicated data, e.g., incorporating data augmentations instead of or on top of pre-defined distances. 
In this sense, by choosing feature dimension to be 2, various SSCL methods can also be used as data visualization tools \citep{bohm2022unsupervised, damricht}. Specifically on CIFAR-10, standard $t$-SNE can barely reveal any clusters while our $t$-SimCLR with $d_z=2$ produces much more separation among different labels. More details can be found in Appendix \ref{sec:sne}. 
}

{\small
\bibliography{reference}
\bibliographystyle{iclr2023_conference}}
\clearpage
\appendix
\begin{center}
    \Large\textbf{Appendix}
\end{center}
\setcounter{equation}{0}
\setcounter{page}{1}
\renewcommand{\thetable}{\Alph{section}.\arabic{table}}
\renewcommand{\thefigure}{\Alph{section}.\arabic{figure}}

\section{Technical Details}
\label{app:tech}

\subsection{Implicit bias of SimCLR on CIFAR-10.}\label{app:ib}
Figure \ref{fig:cos} plots the cosine similarity heat map of learned features from SimCLR on CIFAR-10 dataset. To calculate the similarity of class A (figures denoted by $a_i$) to class B (figures denoted by $b_i$), we first calculate the mean of $b_i$ as $\bar{b}$. Then, we sum up $\sum_{i}\mathrm{sim}(a_i, \bar{b})$ and plot is with colors. 
Hence, the similarity matrix shown in Figure \ref{fig:cos} is not symmetric.

\begin{figure}[h]
    \centering
     \includegraphics[width=0.5\textwidth
    ]{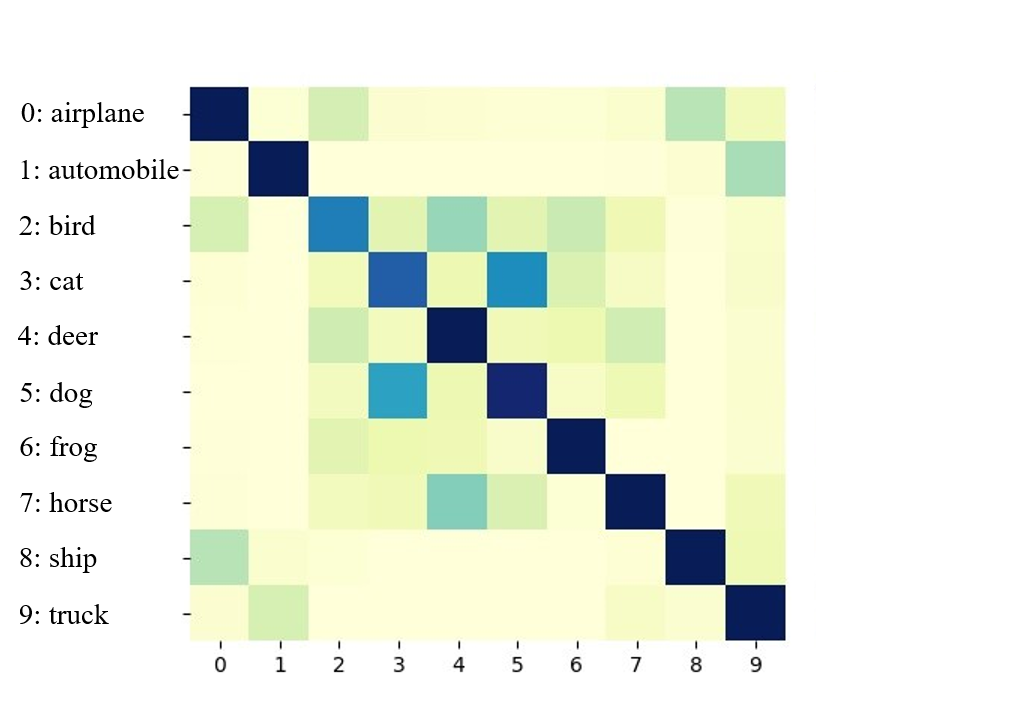}
    \caption{Cosine similarity heat map of learned features from SimCLR on CIFAR-10 dataset. The darker the color, the larger the similarity. }
     \label{fig:cos}
\end{figure}

\subsection{Proof of Proposition \ref{prop:gauss}}
Recall the domain-agnostic data augmentation process. For any $\bx_i$, the probability density of having $\bt\in\RR^d$ as its augmented point can be characterized as 
\[
P_{\bt|\bx_i}=\PP(\bx_i \mbox{ and }\bx_i' = \bt \mbox{ form a positive pair }|\bx_i)=\phi(\bt-\bx_i).
\]
For isotropic Gaussian densities with mean 0 and covariance matrix $\sigma^2\bI$, $\phi(\bt-\bx_i)\propto \exp(-\|\bt-\bx_i\|_2^2/2\sigma^2)$, which is monotonic with the $l_2$ distance between $\bt$ and $\bx_i$.

\subsection{Investigations on $C(f)$.}\label{sec:cf}
Figures \ref{fig:gmm} and \ref{fig:comp} illustrate the evolution of different complexity measurements during the training process under the Gaussian mixture setting and the CIFAR-10 respectively. 

\alert{In the Gaussian mixture setting, the feature extractor is a fully connected ReLU network. Besides $C(f)$, we also evaluate the popular sum of squared weights. The observations on SimCLR are listed as below:
\begin{itemize}
    \item 
    The expected Lipschitz constant $C(f)$ is small in initialization. It first increases (till around 100 iterations) and then consistently decreases. This empirically supports the implicit bias towards minimizing $C(f)$. 
    \item
    $C(f)$ and the sum of squared weights share very similar patterns. 
    \item
    The SNE loss is non-increasing, as if we are doing stochastic neighbor embedding using $l_2$-distance. 
\end{itemize}
}

\alert{In the CIFAR-10 case, the feature extractor is ResNet-18 plus a fully-connected projection layer. The output from ResNet-18 is usually called representation (512 dimensional) and is utilized for downstream tasks while the projection (128 dimension) is used for training. 
Such a representation-projection set up is common in SSCL. \cite{ma2023deciphering} aimed to decipher the projection head and revealed that the
projection feature tends to be more uniformly distributed while the representation feature exhibits stronger alignment. 
Besides $C(f)$, we also evaluate the $l_2$-norm of the representation. 
The observations for SimCLR and $t$-SimCLR on CIFAR-10 are summarized as below:
\begin{itemize}
    \item 
    $C(f)$ for the projection layer shares similar patterns as in the Gaussian mixture case, first increase and then decreases. 
    However, $C(f)$ for the representation layer monotonically decreases. 
    \item
    $C(f)$ for the projection layer and the $l_2$-norm in the representation layer share almost identical patterns. 
    \item
    Comparing SimCLR, both the the calculated $C(f)$ and $l_2$-norm are much smaller for $t$-SimCLR. 
\end{itemize}
}

\alert{In conclusion, on one hand, our empirical results demonstrate that the complexity of the feature extractor $C(f)$ does decrease during training and seem to be implicitly minimized. On the other hand, its trend is shared with other more popularly used complexity measurements. 
}

\begin{figure}
    \centering
     \includegraphics[width=0.6\textwidth
    ]{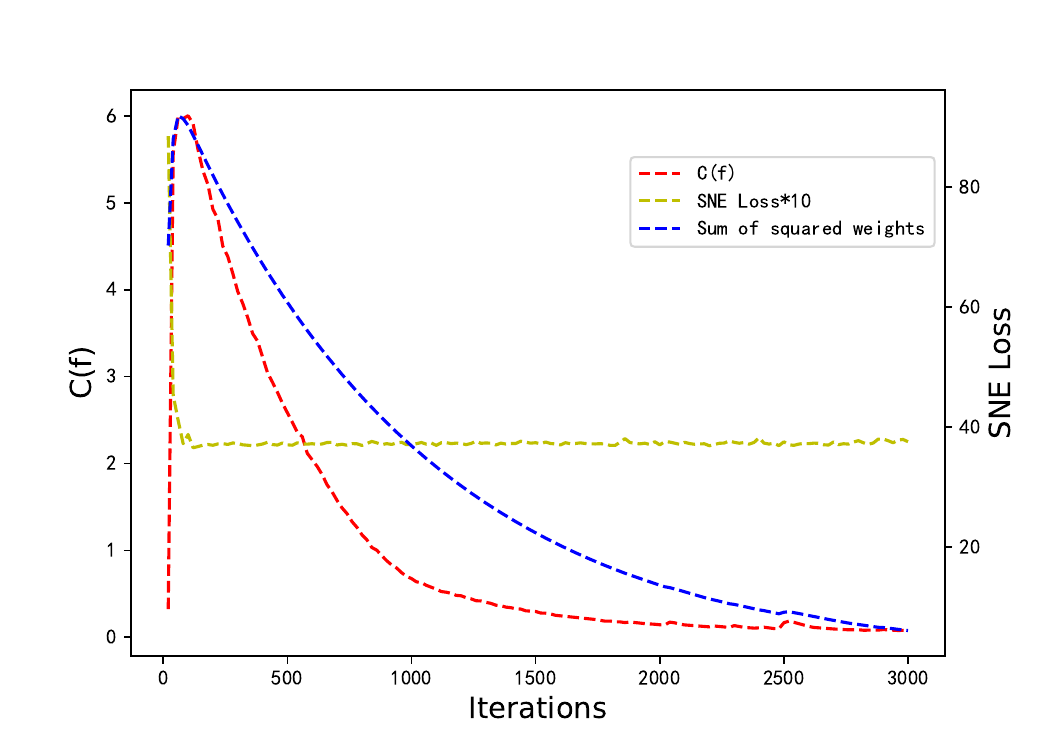}
    \caption{\alert{Empirical evaluation on the complexity of the learned feature mapping during training under the Gaussian mixture setting.  
    Two complexity measurements are considered, i.e., $C(f)$ as in \eqref{eqn:cf} and the SNE loss as in \eqref{eqn:sne}. The SNE loss here only serves as in indicator for how well the pairwise distances are preserved. The training objective is the standard InfoNCE loss. The SNE loss decreases quickly until in the first 100 iterations and then stays flat. } }
     \label{fig:gmm}
\end{figure}

\begin{figure}
    \centering
     \subfigure[SimCLR on CIFAR-10.]{
     {\includegraphics[width=0.7\textwidth]{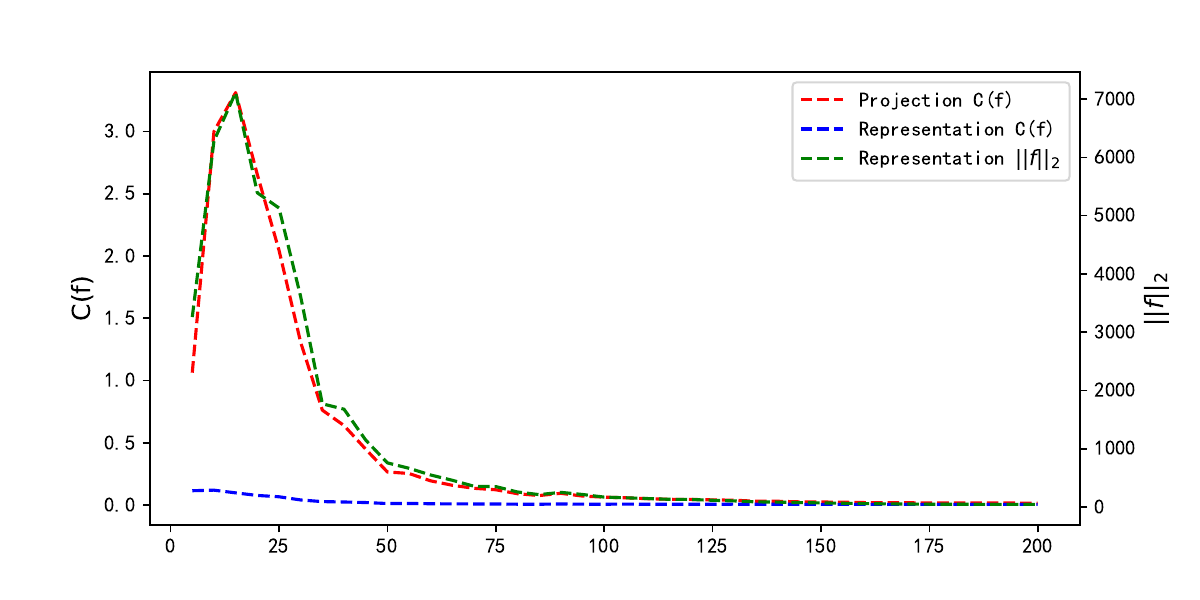}}}
     \subfigure[$t$-SimCLR on CIFAR-10.]{
     {\includegraphics[width=0.7\textwidth]{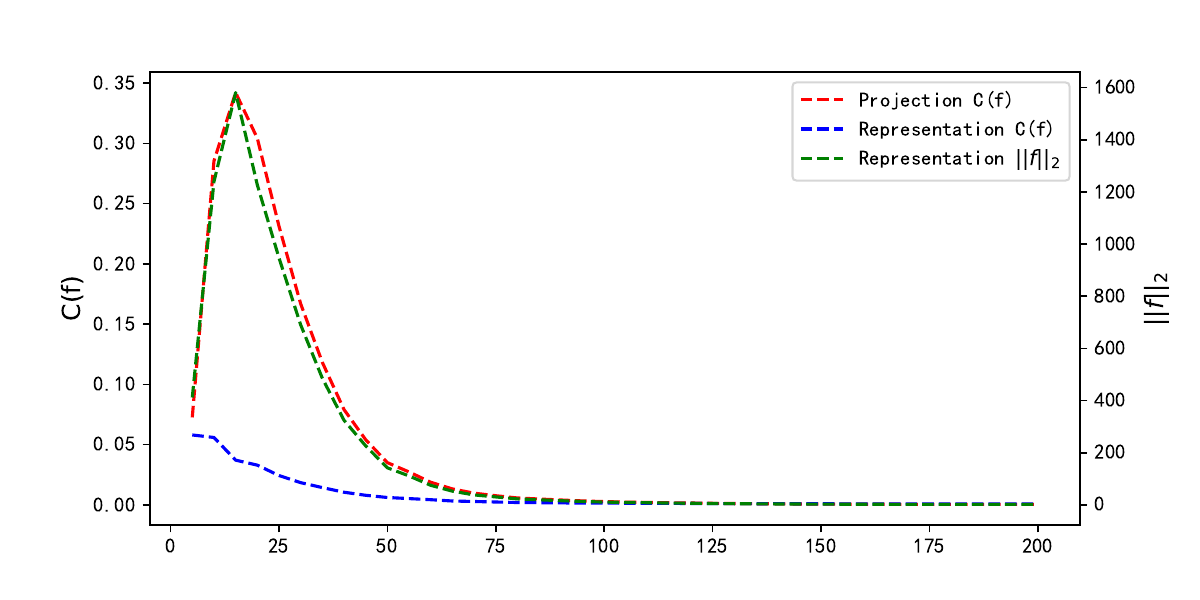}}}
    \caption{\alert{Empirical evaluation on the complexity of the learned feature mapping during training on CIFAR-10. Two complexity measurements are considered, i.e., $C(f)$ as in \eqref{eqn:cf} and $l_2$-norm. Specifically, we calculate the expected Lipschitz constant on both the representation layer (512-dimensional) and the projection layer (128-dimensional). Figure (a) and (b) show the trends (along the 200 training epochs) for SimCLR and $t$-SimCLR respectively.} }
    \label{fig:comp}
\end{figure}

\subsection{Proof of Corollary \ref{lemma1}}
In this section, we illustrate with rigor how the hypothesized implicit bias can give rise to structure-preserving property of SSCL. 
Corollary \ref{lemma1} states that minimizing the (Lipschitz) complexity of the feature mapping will also result in the best match between $P$ and $Q$ (under permutation). To provide more theoretical insight, we present the following lemma in the simpler vector-matching case.

\begin{lemma}\label{lemma0}
Let $0<x_1<\cdots< x_m$ and $0<y_1<\cdots<y_m$ be two real-valued sequences, normalized such that $\sum_{i=1}^m x_i^2 = \sum_{i=1}^m y_i^2=1$.
Consider a permutation $\pi$ of $\{1,\cdots, m\}$ and denote all such permutations as $T$. Then 
\[
 \argmin_{\pi\in T}\sum_{i=1}^m \frac{y_{\pi(i)}}{x_i} \ = \  \argmin_{\pi\in T} \sum_{i=1}^m (x_i-y_{\pi(i)})^2 :=\pi^*,
\]
where $\pi^*(i)=i$ for all $i=1,\cdots,m$. 
\end{lemma}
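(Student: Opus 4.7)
The plan is to recognize both optimization problems as elementary instances of the rearrangement inequality. The normalization $\sum_i x_i^2 = \sum_i y_i^2 = 1$ is what glues the two seemingly different objective functions together and reduces each to the task of pairing a monotone sequence with a permutation of another monotone sequence.

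First I would handle the squared-distance objective by expanding the square. Since $\sum_i y_{\pi(i)}^2 = \sum_i y_i^2 = 1$ regardless of the permutation, we get
\begin{equation*}
\sum_{i=1}^m (x_i - y_{\pi(i)})^2 \;=\; \sum_i x_i^2 + \sum_i y_{\pi(i)}^2 - 2\sum_i x_i\, y_{\pi(i)} \;=\; 2 - 2\sum_{i=1}^m x_i\, y_{\pi(i)}.
\end{equation*}
Minimizing the left-hand side is thus equivalent to maximizing $\sum_i x_i\, y_{\pi(i)}$. Because $(x_i)$ and $(y_i)$ are both strictly increasing and positive, the rearrangement inequality says that this inner product is maximized, uniquely, when the two sequences are matched in the same order, i.e.\ at $\pi(i) = i$.

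Second I would handle the ratio objective $\sum_i y_{\pi(i)}/x_i$ by rewriting it as $\sum_i (1/x_i)\, y_{\pi(i)}$. The sequence $(1/x_i)_{i=1}^m$ is strictly decreasing while $(y_j)_{j=1}^m$ is strictly increasing. The rearrangement inequality in its other direction says that a sum of products of two sequences is minimized when they are oppositely ordered; pairing the decreasing sequence $(1/x_i)$ with the increasing permuted sequence $(y_{\pi(i)})$ is precisely the oppositely-ordered configuration, which again forces $\pi(i) = i$ uniquely. Combining the two parts yields that both $\argmin$'s coincide and equal the identity permutation.

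The proof is essentially a two-step bookkeeping argument, so there is no real technical obstacle; the only subtleties worth flagging are that (i) strict monotonicity of both sequences is what makes the rearrangement inequality strict and hence gives uniqueness of $\pi^*$, and (ii) the $L_2$ normalization is genuinely used, not cosmetic — without it, the squared-distance objective does not reduce to a pure inner-product maximization. These observations also suggest why Theorem~\ref{lemma2} needs the monotone transform $\bar p_{ij} = -1/p_{ij}$: it is the matrix analogue of matching the two objectives $\sum y_{\pi(i)}/x_i$ and $\sum(x_i - y_{\pi(i)})^2$ across the same minimizer, lifted from vectors to symmetric similarity matrices.
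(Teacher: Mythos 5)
Your proof is correct and follows essentially the same route as the paper's: expand the square (so that minimizing $\sum_i (x_i-y_{\pi(i)})^2$ reduces to maximizing $\sum_i x_i y_{\pi(i)}$) and apply the rearrangement inequality to both objectives, with strict monotonicity of the sequences giving uniqueness of the identity permutation. One small quibble with your side remark (ii): since $\sum_i y_{\pi(i)}^2 = \sum_i y_i^2$ is permutation-invariant regardless of scaling, the $L_2$ normalization is not actually needed for the equivalence of the two $\argmin$'s --- it only yields the clean constant $2-2\sum_i x_i y_{\pi(i)}$.
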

\begin{proof}
By the rearrangement inequality, we have
\[
\sum_{i=1}^m \frac{y_{\pi(i)}}{x_i} \ge \sum_{i=1}^m \frac{y_{i}}{x_i}.
\]
Similarly, 
\begin{align*}
\sum_{i=1}^m (x_i-y_{\pi(i)})^2 &= \sum_{i=1}^m x_i^2 + \sum_{i=1}^m y_i^2 - 2\sum_{i=1}^m  x_i\cdot y_{\pi(i)}\ge 2-2\sum_{i=1}^m  x_i\cdot y_{i}.
\end{align*}
\end{proof}
Lemma \ref{lemma0} gives a vector-version illustration of our Corollary \ref{lemma1}, stating that minimizing the expected derivative (to zero) of the mapping function $f$, i.e., $\sum_i f(x_i)/x_1$ leads to preserving the norm difference of the input vector and output vector.

Next, we provide the proof of Theorem \ref{lemma2}.

\begin{proof}[Proof of Theorem \ref{lemma2}]
Straightforwardly, we can write
\begin{align*}
    \|\bar{P}- {Q}^{\pi}\|_F &= \sum_{i\ne j} \rbr{\frac{1}{p_{ij}} + {q_{\pi(i)\pi(j)}} }^2\\
    &= \sum_{i\ne j}\frac{1}{p_{ij}^2} + \sum_{i\ne j}{q_{\pi(i)\pi(j)}}^2 + 2\sum_{i\ne j}\frac{q_{\pi(i)\pi(j)}}{p_{ij}}\\
    &=2C_1(P,Q^{\pi}) +  \sum_{i\ne j}\frac{1}{p_{ij}^2} + \sum_{i\ne j}{q_{ij}}^2
\end{align*}
Thus, minimizing $C_1(P, Q^{\pi})$ also minimizes $\|\bar{P}- {Q}^{\pi}\|_F$. 
\end{proof}

Theorem \ref{lemma2} is a straightforward generalization of Lemma \ref{lemma0}. Next, we provide proof for Corollary \ref{lemma1}, restated below.

\begin{proof}[Proof of Corollary \ref{lemma1}]
Recall the SimCLR loss $L_{\mathrm{InfoNCE}} = \frac{1}{2n}\sum_{i=1}^n\rbr{l(\bx_i, \bx_i^\prime) + l(\bx_i^\prime, \bx_i)}$, where 
\[
   l(\bx_i, \bx_i^\prime) =  -\log\frac{\exp(\mathrm{sim}(f(\bx_i),  f(\bx_i^\prime))/\tau)}{\sum_{x\in \cD_n\cup\cD_n^\prime \backslash\{\bx_i\} }\exp(\mathrm{sim}(f(\bx_i),  f(\bx))/\tau)}. 
\]
Without loss of generality, let $\tau=1$. Notice that $l(\bx_i, \bx_i^\prime)$ is monotonically decreasing as sim$(f(\bx_i), f(\bx_i^\prime))$ increases, due to the monotonicity of function $\frac{x}{x+c}$ with respect to $x>0$ for any $c>0$. 
Hence, in order for $L_{\mathrm{InfoNCE}}$ to be minimized, perfect alignment is required, i.e., $f(\bx_i)=f(\bx_i^\prime)$ for any $i=1,\ldots, n$. 

With perfect alignment achieved, $L_{\mathrm{InfoNCE}}$ only concerns the pairwise similarity between negative samples $f(\bx_i)$'s, which can be simplified as $L_{\mathrm{InfoNCE}}\ge L_{\mathrm{uniform}}$ where
\begin{align*}
  L_{\mathrm{uniform}} &= \frac{1}{n}\sum_{i=1}^n -\log\frac{e}{e+\sum_{j\ne i}\exp(\mathrm{sim}(f(\bx_i), f(\bx_j)))}\\
    &\ge \log \rbr{\frac{1}{n}\sum_{i=1}^n \rbr{1+\frac{1}{e}\sum_{j\ne i}\exp(\mathrm{sim}(f(\bx_i), f(\bx_j)))}}\\
    &\ge \log\rbr{1+ \frac{1}{n\cdot e} {\sum_{1\le i\ne j\le n}\exp(\mathrm{sim}(f(\bx_i), f(\bx_j)))}}.
\end{align*}

$L_{\mathrm{uniform}}$ can be minimized by mapping $\bx_i$'s as distant as possible, hence the connection to Tammas problem and the uniformity principle. 

With sufficient capacity of the feature mapping $f$, the SimCLR loss can be minimized to its (empirical) global minima. However, such $f$ is not unique since $L_{\mathrm{InfoNCE}}$ is invariant to permutations of mapping relationships from $\bx_i$ to $f(\bx_i)$. If $f_n^*$ further minimizes $C(f)$ on the sample level, i.e.,
\begin{align*}
    f_n^*:=\argmin_{f} C_n(f) = \argmin_{f} \sum_{1\le i\ne j\le n}\frac{\|f(\bx_i)-f(\bx_j)\|_2}{\|\bx_i-\bx_j\|_2},
\end{align*}
Then, $f_n^*$ also solves a type of SNE problem with uniformity constraint \eqref{def:sne_uniform} as stated in Theorem \ref{lemma2}. To see this, if we define $q_{ij}=-\|f(\bx_i)-f(\bx_j)\|_2$ and $p_{ij}=-\|\bx_i-\bx_j\|_2$, which is reasonable since the larger the distance, the smaller the similarity, we can directly apply the results in Theorem \ref{lemma2}.
\end{proof}

\begin{remark}
As can be seen from Theorem \ref{lemma2} and the proof of Corollary \ref{lemma1}, we showcase the relationship between minimizing $C(f)$ and structure preserving property by considering a special SNE problem, where the pairwise similarity is not modeled by Gaussian as standard, hence the word ``resembling" in Corollary \ref{lemma1}. Although $q_{ij}=-\|f(\bx_i)-f(\bx_j)\|_2$ is unorthodox, it is reasonable since the larger the distance, the smaller the similarity. 
If we consider the SNE method as in \cite{Hinton06},  our proof does not go through directly and demands more complicated analysis. However, our results are still valid in connecting the complexity of the feature map to the pairwise similarity matching. 
\end{remark}

Our statement in Corollary \ref{lemma1} requires perfect alignment or perfect uniformity. When the assumptions are not perfectly met, we can still obtain insights for the resulting feature mapping. Alignment and uniformity \citep{wang2020understanding} is not the whole story of contrastive learning, and our identified structure-preserving property implicitly induced by complexity minimization provides an other angle of the learning process. From this perspective, contrastive learning can be thought of as a combination of alignment and SNE with uniformity constraint. In Figure \ref{fig:cos}, while obtaining approximate alignment and uniformity, the feature mapping also preserves the relative relationships of the clusters (labels). 

\subsection{Alignment and uniformity of t-SimCLR}
\label{sec:normalization}
Due to the change of training objective, we may want to reevaluate the properties of the learned feature from $t$-SimCLR. We will show that alignment still hold while uniformity is changed (to infinity). 

Let us consider a compact region $\Omega\subset\RR^d$ and $\bx_i\in \Omega$. Let $t$ be the transformation such that the augmented data point $\bx_i'=t(\bx_i)$ is still in $\Omega$. \citet{wang2020understanding} showed that the contrastive loss can be decomposed into the alignment loss and the uniformity loss.
\cite{zimmermann2021contrastive} further showed that the contrastive loss converges to the cross-entropy between latent distributions, where the underlying latent space is assumed to be uniform, and the positive pairs are specified to be an exponential distribution. 
In this section, we show a parallel result, which states that in the population level, the $t$-SNE loss is the cross-entropy between two distributions of generating positive pairs.

\begin{theorem}\label{thm:decofCLloss}
Let $H(\cdot,\cdot)$ be the cross entropy between distributions. Let  $p(\bx)$ be the density of $\bx$, $p(\cdot|\bx)$ be the conditional density of generating a positive pair, and define
\begin{align*}
    q_f(\bx'|\bx) = C_f(\bx)^{-1}\frac{p(\bx')}{1+\|f(\bx)-f(\bx')\|_2^2}, \mbox{ with }C_f(\bx) = \int_{\Omega}\frac{p(\bx')}{1+\|f(\bx)-f(\bx')\|_2^2} {\rm d}\bx'.
\end{align*}
Then, we have
\begin{align}
    \EE_{\bx \sim p(\bx)}(H(p(\cdot|\bx),q_f(\cdot|\bx)) = L_a(f) + L_u(f),
\end{align}
which corresponds to the population-level $t$-SimCLR loss where 
\begin{align*}
    L_a =  & \EE_{\bx \sim p(\bx)}\EE_{\bx \sim p(\bx'|\bx)} \log (1+\|f(\bx)-f(\bx')\|_2^2),\\
    L_u = & \EE_{\bx \sim p(\bx)}\log \EE_{\tilde \bx \sim p(\tilde \bx)} (1+\|f(\bx)-f(\tilde\bx)\|_2^2)^{-1}.
\end{align*}
\end{theorem}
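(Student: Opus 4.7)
\textbf{Proof plan for Theorem \ref{thm:decofCLloss}.} The statement is an identity between a cross-entropy and an alignment-plus-uniformity decomposition, so the natural approach is direct algebraic expansion followed by recognizing each resulting piece. First I would write
\[
H(p(\cdot|\bx), q_f(\cdot|\bx)) \;=\; -\int_{\Omega} p(\bx'|\bx)\log q_f(\bx'|\bx)\,\mathrm{d}\bx',
\]
substitute the definition
\[
q_f(\bx'|\bx) = C_f(\bx)^{-1}\,\frac{p(\bx')}{1+\|f(\bx)-f(\bx')\|_2^2},
\]
and split the logarithm into three additive contributions: a term in $\log C_f(\bx)$, a term in $\log p(\bx')$, and a term in $\log(1+\|f(\bx)-f(\bx')\|_2^2)$.

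Next I would take the outer expectation $\EE_{\bx\sim p(\bx)}$ term-by-term. The alignment part follows immediately: the third contribution above yields
\[
\EE_{\bx}\EE_{\bx'|\bx}\log\bigl(1+\|f(\bx)-f(\bx')\|_2^2\bigr) \;=\; L_a(f),
\]
by definition. For the uniformity part, the key observation is that $\log C_f(\bx)$ does not depend on $\bx'$, so the integral against $p(\bx'|\bx)$ collapses to $\log C_f(\bx)$, and by Fubini one recognizes
\[
C_f(\bx) \;=\; \EE_{\tilde\bx\sim p(\tilde\bx)}\bigl(1+\|f(\bx)-f(\tilde\bx)\|_2^2\bigr)^{-1},
\]
so that $\EE_{\bx}\log C_f(\bx) = L_u(f)$ exactly as claimed.

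The anticipated main obstacle is the middle term $-\EE_{\bx}\EE_{\bx'|\bx}\log p(\bx')$, which involves only the data density and the positive-pair conditional, hence does not depend on $f$. I would argue that under the setup where the theorem is stated in parallel with \citet{zimmermann2021contrastive} (which posits a uniform latent prior on a compact $\Omega$), this term reduces to a constant and can be absorbed; more generally, the identity should be read modulo an $f$-independent additive constant $-\EE_{\bx,\bx'}\log p(\bx')$, which is exactly the differential entropy of $\bx'$ under the joint and plays no role in optimization. I would state this explicitly in the write-up to avoid ambiguity, then conclude by summing the three pieces to recover $L_a(f) + L_u(f)$, completing the proof. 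The algebraic manipulations themselves are routine; the only delicate point is correctly handling the $f$-independent piece and matching the conventions used to define the population-level $t$-SimCLR objective in Section \ref{sec:t-simclr}.
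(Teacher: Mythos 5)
Your proof follows essentially the same route as the paper's: expand the cross-entropy, split the logarithm into the alignment term, the $\log p(\bx')$ term, and $\log C_f(\bx)$, and identify $L_a$ and $L_u$ from the first and last pieces. You are also right to flag the $f$-independent term $-\EE_{\bx}\EE_{\bx'|\bx}\log p(\bx')$: the paper's own proof in fact derives $L_a(f)+L_u(f)-C_p$ with exactly this constant $C_p$, so the stated equality holds only modulo that additive constant, and your explicit treatment of it is if anything more careful than the paper's.
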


\begin{proof}
Note that
\begin{align*}
    & H(p(\cdot|\bx),q_f(\cdot|\bx))\nonumber\\ 
    = &  -\int_{\Omega} p(\bx'|\bx)\log\left(\frac{p(\bx')}{1+\|f(\bx)-f(\bx')\|_2^2}\right){\rm d}\bx' + \log C_f(\bx)\nonumber\\
    = & \int_{\Omega} p(\bx'|\bx)\log(1+\|f(\bx)-f(\bx')\|_2^2){\rm d}\bx' -\int_{\Omega} p(\bx'|\bx)\log(p(\bx')){\rm d}\bx' + \log \int_{\Omega}\frac{p(\bx')}{1+\|f(\bx)-f(\bx')\|_2^2} {\rm d}\bx'\nonumber\\
    = & \int_{\Omega} p(\bx'|\bx)\log(1+\|f(\bx)-f(\bx')\|_2^2){\rm d}\bx' -\int_{\Omega} p(\bx'|\bx)\log(p(\bx')){\rm d}\bx' + \log \EE_{\bx'\sim p(\bx')}(1+\|f(\bx)-f(\bx')\|_2^2)^{-1}.
\end{align*}
Taking expectation with respect to $\bx$ leads to
\begin{align*}
    &  \EE_{\bx \sim p(\bx)}H(p(\cdot|\bx),q_f(\cdot|\bx))\nonumber\\
    = &  \EE_{\bx \sim p(\bx)} \EE_{\bx' \sim p(\bx'|\bx)}\log(1+\|f(\bx)-f(\bx')\|_2^2) + \EE_{\bx \sim p(\bx)}\log \EE_{\tilde\bx\sim p(\tilde\bx)}(1+\|f(\bx)-f(\tilde\bx)\|_2^2)^{-1}\nonumber\\
    &  -\int_{\Omega}\int_{\Omega} p(\bx)p(\bx'|\bx)\log(p(\bx')){\rm d}\bx'{\rm d}\bx\nonumber\\
    = & L_a(f) + L_u(f) - C_p,
\end{align*}
where
\begin{align*}
    C_p & = \int_{\Omega}\int_{\Omega} p(\bx)p(\bx'|\bx)\log(p(\bx')){\rm d}\bx'{\rm d}\bx = \int_{\Omega}\int_{\Omega} p(\bx,\bx')\log(p(\bx')){\rm d}\bx'{\rm d}\bx
\end{align*}
does not depend on $f$.
\begin{align*}
    &  \EE_{\bx \sim p(\bx)}H(p(\cdot|\bx),q_f(\cdot|\bx))\nonumber\\
    = & \int_{\Omega}p(\bx)\frac{1}{p(\bx)}\int_{\Omega} p(\bx,\bx')\log\left(\frac{p(\bx')}{1+\|f(\bx)-f(\bx')\|_2^2}\right){\rm d}\bx'{\rm d}\bx\nonumber\\
    & - \int_{\Omega}\int_{\Omega}\frac{p(\bx)p(\bx')}{1+\|f(\bx)-f(\bx')\|_2^2} {\rm d}\bx{\rm d}\bx'\nonumber\\
     = & \int_{\Omega}\int_{\Omega} p(\bx,\bx')\log\left(\frac{p(\bx')}{1+\|f(\bx)-f(\bx')\|_2^2}\right){\rm d}\bx'{\rm d}\bx\nonumber\\
    & - \int_{\Omega}\int_{\Omega}\frac{p(\bx)p(\bx')}{1+\|f(\bx)-f(\bx')\|_2^2} {\rm d}\bx{\rm d}\bx'.
\end{align*}
This finishes the proof.
\end{proof}

In Theorem \ref{thm:decofCLloss}, $L_a$ is the alignment loss and $L_u$ is the uniformity loss. The decomposition is much more natural for $t$-SimCLR as opposed to that in $L_{\mathrm{InfoNCE}}$, mainly due to the change from conditional to joint distribution when modeling the pairwise similarity. 
Furthermore, if the $t$-SimCLR loss is minimized, we must have $p(\cdot|\bx) = q_f(\cdot|\bx)$, provided $f$ has sufficient capacity. Note that if $p(\cdot|\bx) = q_f(\cdot|\bx)$, then $P_{j|i}$ and $Q_{j|i}$ are perfectly matched, which indicates that we obtain a perfect neighbor embedding. 

Theorem \ref{thm:decofCLloss} implies that the optimal feature mapping $f^*$ satisfies
\begin{align*}
    p(\cdot|\bx) = q_{f^*}(\cdot|\bx),
\end{align*}
which further implies that for any $\bx\in \Omega$, 
\begin{align}\label{eqn:pop}
    & C_{f^*}(\bx)^{-1}\frac{p(\bx')}{1+\|f^*(\bx)-f^*(\bx')\|_2^2} \propto C(\bx)^{-1}p(\bx'|\bx)\nonumber\\
    \Leftrightarrow \  & C_{f^*}(\bx)^{-1}\frac{1}{1+\|f^*(\bx)-f^*(\bx')\|_2^2} \propto C(\bx)^{-1}\frac{p(\bx,\bx')}{p(\bx)p(\bx')},
\end{align}
where $C(\bx)=\int p(\bx'|\bx){\rm d}\bx'$. 
Unlike the usual normalized SimCLR, $t$-SNE does not assume any special structure on $f$ (e.g., $\|f\|_2=1$), thus $f$ can go to infinity. Comparing to the finite sample $t$-SimCLR loss, the population version is trickier to analyze. This is because for a given point $\bx'$, it can be an augmented sample of some $\bx$ (with probability $p(\bx'|\bx)$), or a negative sample of $\bx$ (when we treat $\bx'$ as another sample point). This reflects the essential difficulty between population and finite samples in contrastive learning, not only for $t$-SimCLR. 

For clustered data, \eqref{eqn:pop} provides two important messages, provided that the augmentation is not too extreme and the augmented sample $\bx^\prime$ stays in the same cluster as the original $\bx$. 
On one hand, when $\bx_1$ and $\bx_2$ belongs to different clusters, the joint density $p(\bx=\bx_1, \bx^\prime=\bx_2)$ will be very small, close to zero, which indicates that $\|f^*(\bx_1)-f^*(\bx_2)\|_2$ is very large, tending to infinity. 
On the other hand, for $\bx_1$ and $\bx_2$ belonging to the same cluster, $p(\bx=\bx_1, \bx^\prime=\bx_2)$ will be relatively large. Hence, the features of the same cluster will stay close. 
Overall, we will observe similar clustered structure in the feature space. This is confirmed in the Gaussian mixture setting in Figure \ref{fig:toy}(c), in which case, the problem can be oversimplified as mapping 5 points in $\RR^2$ to the unit-circle.

\section{Connection to distance between distributions}
\label{sec:distance}
Through the lens of stochastic neighbor embedding, 
the feature learning process of SSCL methods can be seen as minimizing certain ``distances" between distributions in \textit{different dimensions}. 
Ideally, the feature should preserve the distributional information about the data. 
Since the data and the feature do not lie in the same metric space, quantitatively measuring their distributional distance is difficult. 
Fortunately, there are existing tools we can utilize, specifically, Gromov-Wasserstein distance \citep{memoli2011gromov, salmona2021gromov}.

Let $\cX$, $\cZ$ be two Polish spaces, each endowed respectively with probability measures $p_x$ and $p_z$. Given two measurable cost functions $c_x:\cX\times\cX\to \RR$, $c_z:\cZ\times\cZ\to \RR$, and $D:\RR\times\RR\to\RR$, the Gromov-Wasserstein  distance can be defined as 
\[
GW_p(p_x, p_z | c_x, c_z) := \rbr{\inf_{\pi\in\prod(p_x,p_z)}\int_{\cX^2\times \cZ^2}D(c_x(x,x^\prime),c_z(z,z^\prime))^p d\pi(x,z)d\pi(x^\prime,z^\prime)}^{1/p},
\]
where $\prod(p_x,p_z)$ denotes all the joint distributions in $\cX\times\cZ$ such that the marginals are $p_x$ and $p_z$. 
Typically, $D(c_x, c_z)$ is chosen to be $|c_x-c_z|$ and $c_x(x,x^\prime)$ is usually chosen to be $\|x-x^\prime\|_p$.
The key idea of the Gromov-Wasserstein distance to circumvent the dimension mismatch is to change from comparing marginal distribution to pairwise distributions, which is very similar to the SNE objective. 
Consider Monge's formulation of the optimal transportation problem and let $z=f(x)$. 
By choosing $c_z(z_i,z_j)=\log(\tilde{Q}_{j|i})$ with $\tilde{Q}$ specified as in \eqref{eqn:qij}, $c_x(x_i,x_j)=P_{j|i}$ with $\tilde{P}$ specified as in \eqref{eqn:pij} and letting $D(c_x,c_z)=c_x\rbr{\log({c_x})-\log({c_z})}$, we have \begin{align*}
    GW_1(p_x,p_{f(x)})\le \EE_{x,x^\prime}\rbr{D(c_x(x,x^\prime),c_z(f(x),f(x^\prime)))},
\end{align*}
where the right hand side recovers the expected InfoNCE loss. Hence, the SNE perspective can also be viewed as minimizing the Gromov-Wasserstein distance between $p_z$ and $p_x$.

It is worth noting that such an interpretation only relates to contrastive learning, not including generative-based self-supervised learning methods such as Masked AutoEncoder (MAE) \citep{he2021masked}.

\section{Experiment details}\label{app:exp}

\begin{figure}
    \centering
     \includegraphics[width=0.55\textwidth]{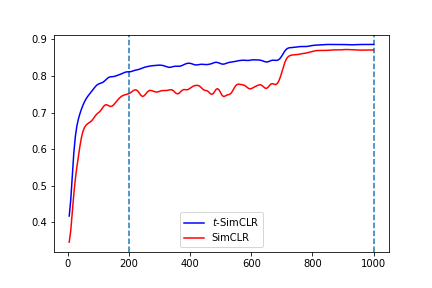}
    \caption{Nearest neighbor test accuracy vs. training epochs. SimCLR and $t$-SimCLR share similar trends and convergence speed. }
    \label{fig:1000}
\end{figure}

\subsection{CIFAR-10 settings}\label{sec:cifar}
CIFAR-10 \citep{krizhevsky2009learning} is a colorful image dataset with 50000 training samples and 10000 test samples from 10 categories. 
We use ResNet-18 \citep{he2016deep} as the feature extractor,
and the other settings such as projection head all follow the original settings of SimCLR \citep{chen2020simple}.
To evaluate the quality of the features, we follow the KNN evaluation protocol \citep{wu2018unsupervised}.
which computes the cosine similarities in the embedding space between the test image and its nearest neighbors, and make the prediction via weighted voting.
We train each model with batch size of 256 and 200 epochs for quicker evaluation. 
For $t$-SimCLR, without specifying otherwise, we grid search the $t_{df}$ and $\tau$ with range \{1, 2, 5, 10\} and \{1,  2, 5, 10\} respectively. 
\paragraph{Ablation of training epochs}
We also run the SimCLR and $t$-SimCLR experiments in the more standard 1000 epochs setting. 
For SimCLR, we use batch size of 512, learning rate of 0.3, temperature of 0.7, and weight dacay of 0.0001.
For $t$-SimCLR, we use batch size of 512, learning rate of 0.8, temperature of 10, weight dacay of 0.0002, and $t_{df}=5$. 
The nearest neighbor accuracy for SimCLR is 87.2\% vs. that for $t$-SimCLR is 88.8\%.

\begin{figure}
    \centering
     \includegraphics[width=0.5\textwidth]{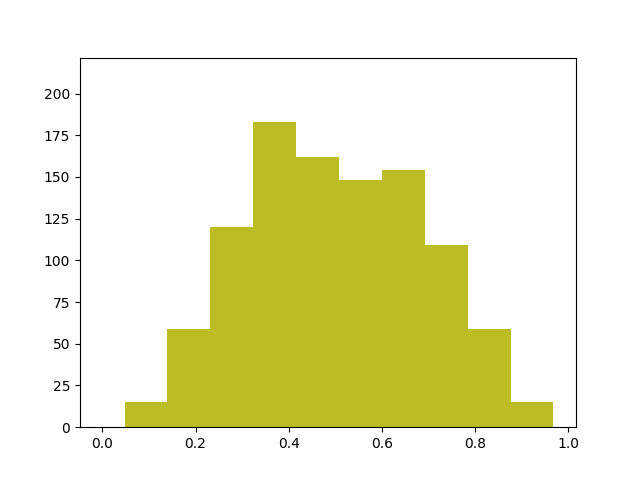}
    \caption{The histogram of IoUs for 1000 constructed positive pairs in CIFAR-10. The empirical distribution is almost symmetric around 0.5.}
    \label{fig:iou}
\end{figure}

\subsection{Image augmentation}
When processing images, several popular augmentations are usually adopted (following the setting in SimCLR \cite{chen2020simple}), e.g., random resized crop (crops a random portion of image and resize it to the original size), horizontal flip, color jitter (randomly change the brightness, contrast, saturation and hue of an image). 
To illustrate the natural weighting scheme in Section \ref{sec:weigted}, we considered random resized crop and specifies the weights by the IoU (intersection over union) of the positive pair. 
In particular, two augmented images are created from an anchor image. Each augmentation crops a rectangular region of the image, denoted by $r_1, r_2$ respectively, and their IoU is defined by the area of intersection $r_1\cap r_2$ divided by the area of the union $r_1\cup r_2$. The IoU is always between 0 and 1. In our experiment, we chose the default settings and
Figure \ref{fig:iou} illustrates the IoU histogram of 1000 constructed positive pairs.

\subsection{Degree of freedom in $t$-SimCLR}\label{sec:df}
\alert{\paragraph{Feature dimension efficiency in OOD case.} To further investigate the generalization ability of SSCL methods, we devise a challenging setting where the model is trained on CIFAR-10 and tested on CIFAR-100 classification. 
In this case, we evaluate the effect of increasing feature dimensions in the projection layer, as an extension on the CIFAR-10 in-distribution case. 
The results are shown in Figure \ref{fig:extend_acc}, where there are two things to note:
\begin{itemize}
    \item 
    The gain of extra dimensions in the OOD case does vanish later than that in the in-distribution case.
    \item
    The advantage of SimCLR vs. $t$-SimCLR is very significant with around 10\% improvement when $d=128$ using nearest neighbor\footnote{\alert{When evaluating by training linear classifiers for 100 epochs, the accuracy for SimCLR is 46.4\% and that for $t$-SimCLR is 48.14\% (averaged over 3 replications).}} classification, indicating that $t$-SimCLR produces better separated clusters.  
\end{itemize}
}
\begin{figure}
    \centering
     \includegraphics[width=0.6\textwidth]{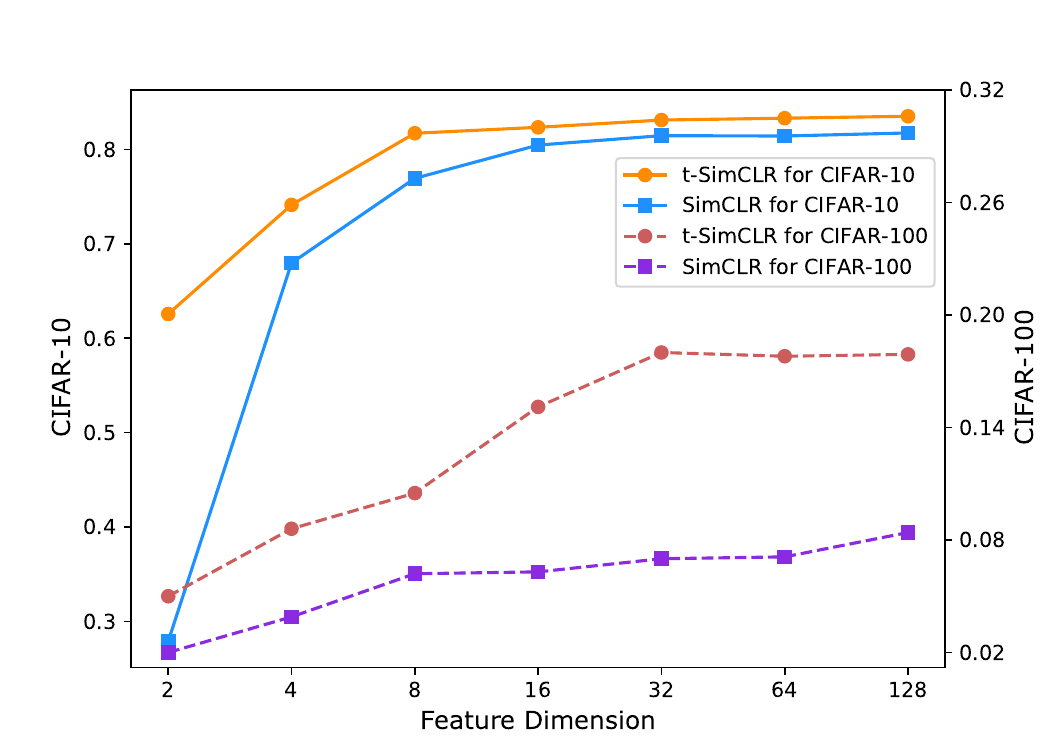}
    \caption{\alert{Extension on Figure \ref{fig:cifar_acc}(b). Nearest neighbor classification accuracy for SimCLR vs. $t$-SimCLR on both CIFAR-10 (in-distribution) and CIFAR-100 (out-of-distribution) using different feature dimensions.  }}
    \label{fig:extend_acc}
\end{figure}

\paragraph{Relationship between $t_{df}$ and $d_z$.} The larger the degree of freedom $t_{df}$, the less heavy-tail the t-distribution. As $d_z$ decreases, the crowding problem becomes more severe and as recommended by \citep{van2008visualizing}, a smaller $t_{df}$ tends to work better. We evaluate the sensitivity of $t_{df}$ (1, 5, 10) under different choices of $d_z$ (1, 2, 4, 8, 16, 32, 64, 128) in CIFAR-10 and the results are reported in Figure \ref{fig:tdf}. 
As can be seen, when $d_z$ is small ($1,2,4,8$), $t_{df}=1$ outperforms. Comparing $t_{df}=5$ and $t_{df}=10$, the two perform similarly when $d_z$ is large ($16, 32,64,128$) but the smaller $t_{df}=5$ yields better accuracy when $d_z=1,2,4$. 

\begin{figure}
    \centering
     \includegraphics[width=0.6\textwidth]{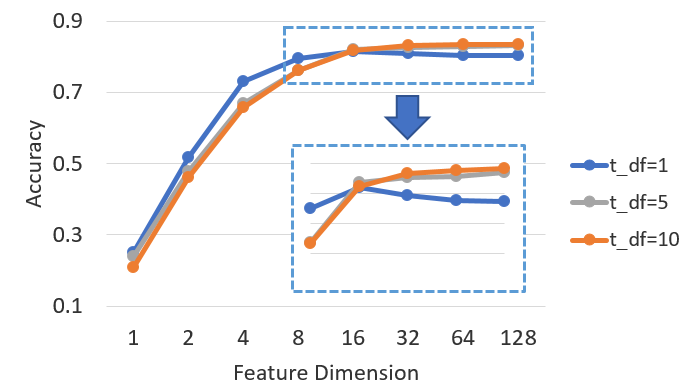}
    \caption{Nearest neighbor classification accuracy on CIFAR-10 for $t$-SimCLR using different feature dimensions and different degrees of freedom (t\_df).  }
    \label{fig:tdf}
\end{figure}

\paragraph{Tuning temperature vs. tuning $t_{df}$.} 
As illustrated in Section \ref{sec:t-simclr}, when the feature space dimension is low, the heavy-tailed t-distribution is a better choice than Gaussian to alleviate the crowding problem. 
Even though tuning the temperature of $L_{\mathrm{InfoNCE}}$, i.e., making $\tau$ larger, can also have the effect of making the distribution less concentrated ($\tau$ can be seen as the standard deviation), tuning temperature and tuning $t_{df}$ are fundamentally different. 
The former is controlling how fast does the similarity $Q_{i,j}$ decays as the distance between $\bz_i$ and $ \bz_j$ increases, while the latter serves as a scaling factor, offering constant level modification of the scheme. 
In our experiments with SimCLR vs $t$-SimCLR on CIFAR-10, temperature is tuned as a hyperparameter. The difference in $\tau$ can never make up to the difference between the baseline SimCLR and $t$-SimCLR. 
We found $\tau=0.5$ to work better for the base SimCLR while larger $\tau$ works better with our $t$-SimCLR. We recommend $\tau=5$ as the default choice.

\subsection{ImageNet pre-training}
To show the ability for large scale domain transfer and OOD generalization, we conduct experiments on ImageNet pre-training based on MoCo-v2 with its official implementation\footnote{ https://github.com/facebookresearch/moco}. We follow most of their settings, e.g, data augmentation, 200 epochs pre-training, and optimization strategy, etc. The loss is modified according to Section~\ref{sec:t-simclr} and batch normalization is applied along every dimension. We grid search the $t_{df}$ and $\tau$ with range \{2, 5, 10, 15\} and \{0.2, 2, 5, 10\} respectively. Finally we choose $t_{df}=10$ and $\tau=5$ to be the optimal hyperparameters. We use this pre-train model as initialization for domain transfer and OOD experiments.

\subsection{Domain transfer}
We compare MoCo-v2 pre-trained with 800 / 200 epochs and $t$-MoCo-v2 on Aircraft, Birdsnap, Caltech101, Cars, CIFAR10, CIFAR100, DTD, Pets, and SUN397 in Table~\ref{tab:iid_dataset1}. We follow the transfer settings in~\cite{ericsson2021well} to finetune the pre-trained models. For datasets Birdsnap, Cars, CIFAR10, CIFAR100, DTD, and SUN397, we report the top-1 accuracy metric, while for Aircraft, Caltech101, and Pets, we report the mean per-class accuracy metric. 
We also follow~\cite{ericsson2021well} to split each dataset into training, validation, and test sets. On each dataset, we perform a hyperparameter search as follows. (1) We choose the initial learning rate according to a grid of 4 logarithmically spaced values between $1\times 10^{-4}$ and $1\times 10^{-1}$; (2) We choose the weight decay parameter according to a grid of 4 logarithmically spaced values between $1\times 10^{-6}$ and $1\times 10^{-3}$, plus no weight decay; (3) The weight decay values are divided by the learning rate; 
(4) For each pair of learning rate and weight decay, we finetune the pre-trained model for 5000 steps by SGD with Nesterov momentum 0.9, batch size of 64, and cosine annealing learning rate schedule without restarts.
As can be seen in Table~\ref{tab:iid_dataset1}, our $t$-MoCo-v2 with 200 epochs even outperform the baseline with 800 epochs on average. 

\begin{table}[t]
\centering
\caption{Domain transfer results of vanilla MoCo-v2 and $t$-MoCo-v2.}
\label{tab:iid_dataset1}
\resizebox{1\textwidth}{!}{
\begin{tabular}{l|ccccccccc|c}
\toprule
Method   & Aircraft &  Birdsnap &  Caltech101 &  Cars   & CIFAR10 & CIFAR100 & DTD   & Pets  & SUN397 & Avg. \\
\midrule
MoCo-v2 (800 epochs) & \textbf{83.80}    &  45.51    &   83.01     &  \textbf{86.18}  & \textbf{96.42}   & 71.69   & \textbf{71.70} & \textbf{89.11} & 55.61  & 75.89 \\
MoCo-v2 (200 epochs)& 82.75    &  44.53    &   83.31     &  85.24  & 95.81   & 72.75   & {71.22} & 86.70 & 56.05  & 75.37 \\
$t$-MoCo-v2 (200 epochs)  & {82.78}    &  \textbf{53.46}    &   \textbf{86.81}     &  {86.17}  & {96.04}   & \textbf{78.32}    & 69.20 & {87.95} & \textbf{59.30}  & \textbf{77.78} \\
\bottomrule
\end{tabular}}
\end{table}

\subsection{OOD generalization}
To demonstrate the advantage of our modification, we also compare MoCo-v2 pre-trained with 800 / 200 epochs and $t$-MoCo-v2 on OOD generalization benchmarks: PACS~\cite{Li2017DeeperBA}, VLCS~\cite{Fang2013UnbiasedML}, Office-Home~\cite{Venkateswara2017DeepHN}. 
We follow the standard way to conduct the experiments, i.e., choosing one domain as the test domain and using the remaining domains as training domains, which is named the leave-one-domain-out protocol.
The top linear classifier is trained on the training domains and tested on the test domain. Each domain rotates as the test domain and the average accuracy is reported for each dataset in Table~\ref{tab:ood_dataset1}. On each dataset, we perform a hyperparameter search following DomainBed~\cite{gulrajani2020search}. We adopt the leave-one-domain-out cross-validation setup in DomainBed with 10 experiments for hyperparameter selection and run 3 trials.
As can be seen in Table~\ref{tab:ood_dataset1}, our $t$-MoCo-v2 with 200 epochs even significantly outperform the baseline with 800 epochs for all of the three datasets. 

\begin{table}[t]
\centering
\caption{OOD accuracies of vanilla MoCo-v2 and $t$-MoCo-v2 on domain generalization benchmarks.}
\label{tab:ood_dataset1}
\begin{tabular}{l|ccc|c}
\toprule
Method     &  PACS &   VLCS &  Office-Home & Avg.\\
\midrule
MoCo-v2 (800 epochs) & 58.9 &  69.8 &  41.6  & 56.8 \\
MoCo-v2 (200 epochs)   &  58.5 &  70.4 &  36.6  & 55.2 \\
$t$-MoCo-v2 (200 epochs)      &  \textbf{61.3} &  \textbf{75.1} &  \textbf{42.1}  & \textbf{59.5} \\
\bottomrule
\end{tabular}
\end{table}

\alert{
\subsection{SSCL inspired Data Visualization}\label{sec:sne}
$t$-SNE \citep{van2008visualizing} and its variants are designed for data visualization.
However, for more complicated data, such as colored images, the results are not satisfactory. Using standard $t$-SNE, the 2D visualization of the 50K training images of CIFAR-10 (labels denoted as 0, 1,...,9) can be seen in Figure \ref{fig:2sne}, where different labels are hardly separated. 
The poor performance of $t$-SNE on CIFAR-10 can be traced back to the poor distance choice on images, i.e., $l_2$-norm. 
Inspired by the success of SSCL for natural images, $t$-SNE can potentially be improved by incorporating data augmentations. 
}

In light of our perspective (S1), $t$-SNE can take advantage of the distance specified with \eqref{eqn:pij} and the resulting model is essentially our $t$-SimCLR with feature dimension 2. The visualization from $t$-SimCLR is shown in Figure \ref{fig:2tsimclr}, which is much more separated (the nearest neighbor classification accuracy on CIFAR-10 test data is 56.6\%).  
By choosing the feature dimension to be 2, various SSCL methods can also be made into data visualizing tools. In Figure \ref{fig:2simclr}, we visualize the outcome from SimCLR (the nearest neighbor classification accuracy on CIFAR-10 test data is 24.8\%).  

Similar investigations have been carried in \cite{bohm2022unsupervised, damricht} where they focused specifically on data visualization and stochastic neighbor embedding. 

\begin{figure}
    \centering
     \includegraphics[width=0.6\textwidth]{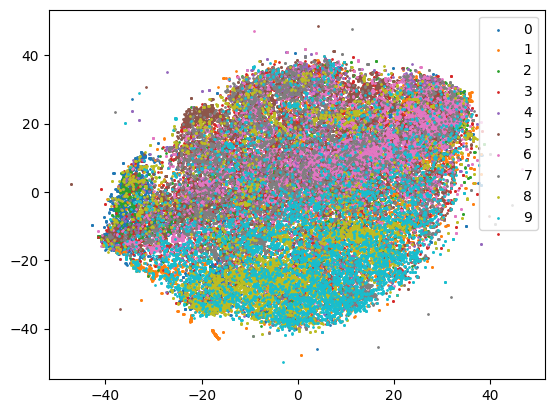}
    \caption{\alert{
    50K CIFAR-10 training images visualization in 2D with \textbf{$t$-SNE}. }
    }
    \label{fig:2sne}
\end{figure}
\begin{figure}
    \centering
     \includegraphics[width=0.6\textwidth]{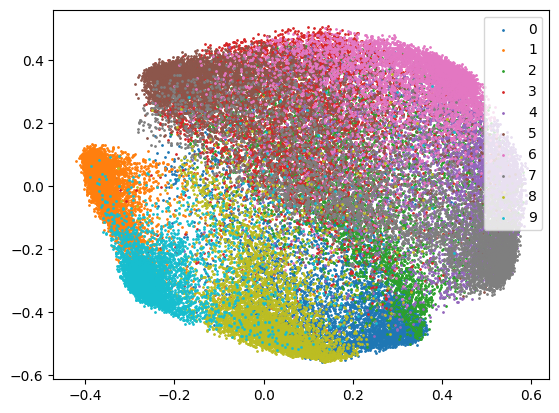}
    \caption{\alert{
  50K  CIFAR-10 training images visualization in 2D with the default \textbf{$t$-SimCLR}. }
    }
    \label{fig:2tsimclr}
\end{figure}
\begin{figure}
    \centering
     \includegraphics[width=0.6\textwidth]{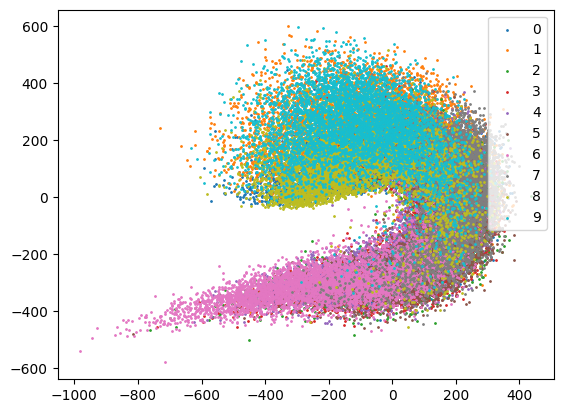}
    \caption{\alert{
    50K CIFAR-10 training images visualization in 2D with the \textbf{SimCLR}.} 
    }
    \label{fig:2simclr}
\end{figure}

\end{document}